\documentclass[runningheads]{llncs}

\usepackage{eccv}

\usepackage{eccvabbrv}

\usepackage{graphicx}
\usepackage{booktabs}

\usepackage[accsupp]{axessibility}  

\usepackage{hyperref}

\usepackage{orcidlink}
\usepackage{bbm}

\usepackage{pifont}
\newcommand{\Y}{\ding{51}}
\newcommand{\N}{\ding{55}}

\usepackage{verbatim}
\usepackage{marvosym}

\newtheorem {assumption}{Assumption}[section]

\usepackage{xcolor}
\usepackage{color,soul}
\definecolor{bronze}{rgb}{1,1,0.6}
\definecolor{silver}{rgb}{0.969,0.796,0.600}
\definecolor{gold}{rgb}{0.941,0.592,0.600}
\newcommand{\gold}[1]{\colorbox{gold}{{#1}}}
\newcommand{\silver}[1]{\colorbox{silver}{{#1}}}
\newcommand{\bronze}[1]{\colorbox{bronze}{{#1}}}

\newcommand{\nickname}{NeoMap}

\begin{document}

\title{\nickname: Training-free Novel-View Synthesis from Single Images and Videos 
} 

\titlerunning{\nickname}

\author{Jinxi Li\inst{1,2}\textsuperscript{\dag}\orcidlink{0000-0001-9683-1359} 
\and
Tianyi Zhang\inst{1,2}\textsuperscript{\dag}\orcidlink{0009-0008-6959-4544} 
\and
Yafei Yang\inst{1,2}\orcidlink{0000-0001-5278-7528}
\and
Zihui Zhang\inst{1,2}\orcidlink{0000-0002-1891-8253}
\and
Peng Huang\inst{1,2}\orcidlink{0009-0002-1229-3579}
\and
Koon Wing Macgyver Lin\textsuperscript{\ddag}\orcidlink{0009-0008-9276-2489}
\and
Bo Yang\inst{1,2}\textsuperscript{\ddag}\textsuperscript{\Letter}\orcidlink{0000-0002-2419-4140}
}

\authorrunning{J.~Li et al.}

\institute{Shenzhen Research Institute, The Hong Kong Polytechnic University \and vLAR Group, The Hong Kong Polytechnic University \\
\textsuperscript{\dag} equal contribution  \quad
\textsuperscript{\ddag} joint supervision \quad \textsuperscript{\Letter} corresponding author\\
\email{\{jinxi.li,tonax.zhang\}@connect.polyu.hk, bo.yang@polyu.edu.hk}}

\maketitle

\begin{abstract}
  We study the challenging problem of novel view video synthesis from single images or monocular videos. Existing methods, which operate under the assumption that pre-trained video models lack native novel view synthesis capability and enforce view alignment via camera conditioning, task-specific fine-tuning, or stepwise hard denoising guidance, often suffer from artifacts and compromised global scene consistency. In this paper, we introduce \textbf{\nickname{}}, a novel training-free framework designed to locate high-fidelity, view-consistent novel view solutions from general pre-trained video models. The key to our approach is the core insight that promising novel view solutions are inherently encoded within the natural video data manifold learned by pre-trained models, and the core challenge is simply to locate this optimal solution. We solve this via our core mechanism: convergent manifold alternating projection iterations that optimize the initial noise. Extensive experiments demonstrate that \nickname{} significantly outperforms all existing methods across 3 standard novel view synthesis benchmarks, including the challenging Tanks-and-Temples, LLFF and DAVIS datasets, achieving state-of-the-art generation fidelity and top-tier view consistency. Our code and data are available at \textcolor{red}{\href{https://github.com/vLAR-group/NeoMap}{https://github.com/vLAR-group/NeoMap}}
  \keywords{Novel-view synthesis \and Video generation \and Training-free}
\end{abstract}

\section{Introduction}
\label{sec:intro}
Monocular novel view synthesis (NVS), which generates photorealistic unseen views from a single-view image or unconstrained monocular video, is a core task in computer vision with wide applications in AR/VR, 3D/4D content creation, and robotic perception. However, it remains an extremely challenging problem for traditional reconstruction-based methods \cite{kerbl3Dgaussians, mildenhall2020nerf} due to inherent severe occlusions, incomplete scene information, and depth ambiguity from monocular inputs. Recently, the rapid development of powerful large-scale video generation models \cite{blattmann2023Stable,chen2024videocrafter2,kong2024hunyuanvideo,yang2024cogvideox,wan2025}, which show a reasonable understanding of spatial-temporal relationships, has opened a promising new direction to address this long-standing challenge.

\begin{figure}[t]
\setlength{\abovecaptionskip}{ 2 pt}
\setlength{\belowcaptionskip}{ -2 pt}
\centering
\centerline{\includegraphics[width=1.\linewidth]{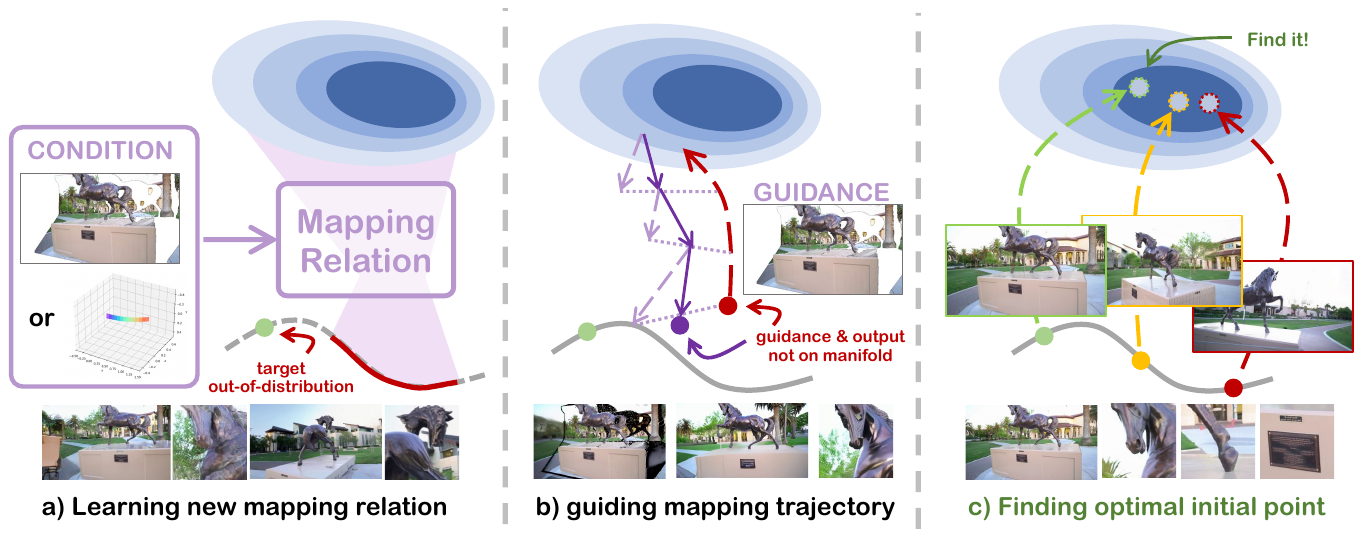}}
    \caption{Different NVS paradigms: a) Learning new mapping relation would shrink the output data manifold, leading to overfitting and OOD problem; b) Guiding generation trajectory with bad condition would lead the generation output away from data manifold; c) Different related output possibilities already exist in the data manifold. }
    \label{fig:paradigms}\vspace{-0.4cm}
\end{figure}

One line of work learns a new conditional video generation mapping by training models on 3D/4D data with camera pose annotations. Either injecting camera trajectories \cite{bai2025recammaster,vd3d,wang2024motionctrl,trajattn} or taking depth-based warped view images \cite{chen2025flexworld,yu2024viewcrafter,trajattn} as additional generation conditions, these methods essentially constrain the denoising process with pose-related signals. Although these methods show promising performance, they inevitably suffer from overfitting to limited camera trajectory distributions and depth estimation noise seen during training, leading to poor generalization to in-the-wild scenes and unseen camera motions. Another line of work avoids task-specific retraining via training-free guidance strategies \cite{you2025nvs,song2025worldforge}. They use depth-based warped images as reliable priors, and blend the latent features of priors with the generating latent to guide the generation denoising step-by-step. However, such hard step-wise constraints often disrupt the continuity of the native generation process of the pre-trained model, leading to severe artifacts including incomplete content, corrupted scene structure, and incorrect semantic understanding.

Despite their differences, as shown in Figure \ref{fig:paradigms}, both categories share a fundamental assumption: in order to obtain a valid and geometrically consistent NVS solution, \textit{the mapping relation between the noise space and the video latent space should be modified or redefined in a task-specific way}. In this work, we propose to challenge this assumption with a key insight: a high-quality NVS solution for a given input and target pose should inherently exist in the native output latent data manifold of a pre-trained video generation model, which has been trained on massive diverse real-world videos, whose output data manifold thus already covers rich 3D scenes and natural camera motions. Therefore, \textbf{instead of focusing on the mapping relation by altering the denoising trajectory or re-training the model, we propose to solve this problem simply by locating this valid target results in the output space}. 

Given a video generation model, it is extremely challenging to find a specific video in its intractable abstract output data manifold. Nevertheless, recent video diffusion methods can all be regarded as a deterministic mapping (ODE for flow-matching methods \cite{lipman2023flowmatching,liu2022flow} or PF-ODE for diffusion methods \cite{lu2022dpmsolver}) from regular gaussian noise space to video data manifold, which means every video in the output space can be defined by a point in the random noise space. Thus, our challenge of locating the valid video data point in the data manifold can be simply framed as searching for the optimal initial noise latent corresponding to the target valid result.

With this motivation, we propose an elegant pipeline to discover the exact initial noise latent that naturally decodes into a high-fidelity, geometrically consistent novel view video. Our framework, \textbf{\nickname{}} (\textbf{N}ois\textbf{E} initializati\textbf{O}n by \textbf{M}anifold \textbf{A}lternating \textbf{P}rojection), constructs this optimal initialization through three intuitive modules: 1) a \textbf{novel view prior module} establishes base geometric constraints via depth-based image warping, which inherently suffers from out-of-distribution artifacts and semantic emptiness in occlusions; 2) an \textbf{anchored manifold projection (AMP) module} addresses this by pushing the state toward the video data manifold to hallucinate plausible content in unobserved regions while adaptively anchoring reliable visible features; 3) a \textbf{pixel-constrained projection (PCP) module} rectifies the spatial bleeding caused by VAE latent blending by enforcing strict, immutable geometric boundaries back in the pixel space.

The core of our pipeline lies in the alternating projection between the AMP and PCP modules. By iteratively mapping the state between the unconstrained video data manifold and the strict NVS geometric constraints, \textbf{\nickname{}} mathematically guaranties that our initial noise converges to the optimal intersection of global semantic realism and precise 3D view alignment. Coupled with an auxiliary trajectory re-anchoring strategy to prevent integration drift during generation, this enables us to synthesize photorealistic videos without any task-specific finetuning or additional guidance. Our main contributions are:
\begin{itemize}
    \item We introduce a new training-free framework to solve the monocular novel view synthesis problem by finding a optimal initial noise, eliminating the need for any generation guidance or task-specific finetuning. 
    \item We propose to initialize the noise via manifold alternating projection, simultaneously achieving high fidelity and geometric consistency by ensuring that the generation output is located on the NVS constrained data manifold. 
    \item We demonstrate a significant improvement in visual quality and geometric consistency on three existing datasets for both static single-image and dynamic monocular video novel view synthesis.
\end{itemize}

\section{Related Work}
\label{sec:liter}
\textbf{Novel View Synthesis by Camera-Controlled Video Generation.}  Driven by the rapid advancement of foundational video generation models \cite{blattmann2023Stable,chen2024videocrafter2,kong2024hunyuanvideo,yang2024cogvideox,wan2025}, extensive research has explored conditional video generation to achieve precise camera control. A prominent line of work treats this as a conditional generation task. Early approaches in this domain \cite{animatediff, direct_a_video, wang2024motionctrl, cameractrl, CVD, ac3d} primarily focused on incorporating camera extrinsics directly into the generation process by fine-tuning diffusion models with specialized motion modules or parameter encoders to learn specific trajectory patterns. Building upon this, recent advancements introduce more explicit conditioning mechanisms to enhance both geometric alignment and visual quality. For instance, TrajectoryAttention \cite{trajattn} injects fine-grained motion control by aligning content along specified trajectories directly within the attention layers. TrajectoryCrafter \cite{trajcrafter} disentangles deterministic view transformations from stochastic generation by utilizing a dual-stream architecture conditioned on point cloud renders, while ReCamMaster \cite{bai2025recammaster} frames the problem as video-conditioned generative re-rendering to reproduce dynamic scenes under novel trajectories. 

\textbf{Novel View Synthesis by Warping and Inpainting.} A parallel paradigm tackles NVS through a warping-and-inpainting formulation. This pipeline stems from the success of diffusion-based inpainting models \cite{lugmayr2022repaint, Kim2025rad, zheng2025lanpaint}, which demonstrate a strong ability to plausibly fill semantic blanks in unobserved regions. Methods like GenWarp \cite{seo2024genwarp} and VistaDream \cite{wang2024vistadream} synthesize novel views by first warping the source image using depth priors, followed by utilizing an inpainting network based on image generation to hallucinate the missing pixels caused by occlusion. Extending this to dynamic 4D scenes, recent video-based methods such as ViewCrafter \cite{yu2024viewcrafter} and FlexWorld \cite{chen2025flexworld} lift the source video into a 3D point cloud, render it from the target trajectory, and employ a video generation model to temporally inpaint the resulting visual holes. However, a major bottleneck of these approaches is the absolute requirement to learn or heavily fine-tune a task-specific video inpainting model to handle the severe warping artifacts and domain gaps. To alleviate this heavy training burden, recent works \cite{hou2024training, hu2024motionmaster, ling2024motionclone} like NVS-Solver \cite{you2025nvs} have explored training-free zero-shot novel view synthesis by adaptively modulating the score function of pre-trained models. In our pipeline, we align with this training-free warping and inpainting philosophy but strictly diverge by finding a good initialization rather than controlling the generation mapping relation.

\textbf{Generation Control via Noise Handling.} Controlling the generative process through explicit noise manipulation has emerged as a powerful paradigm, originating from image editing tasks where inverting the denoising trajectory successfully extracts the structural composition of a source image \cite{Tumanyan2023, mokady2022null, parmar2023zeroshot, huberman2024edit, cyclediffusion, hertz2022prompt}. Extending these principles to video generation introduces the critical challenge of temporal coherency \cite{ceylan2023pix2video, tokenflow2023, yang2023rerender,kasten2021layered, couairon2024videdit, chai2023stablevideo}. More recently, explicit manipulation of the initial noise or intermediate latent space has proven highly effective for dictating motion dynamics without retraining. Techniques in this domain range from applying pre-specified translation vectors to inverted noise \cite{Khachatryan2023} to constructing correlated noise distributions \cite{ge2023} and aligning latent features \cite{ni2023, blattmann2023}. Pushing this concept further, noise warping \cite{hiwyn,gowiththeflow} demonstrates that spatially warping the initial noise map according to optical flow inherently establishes robust temporal consistency for zero-shot video editing. While these methods firmly establish that structural layout and motion can be dictated by carefully crafted initial noise, they have no constraints on the unseen regions and largely rely on the quality of the flow for the observed regions, which is quite low for an NVS problem due to the noise of the estimated depth and the incompleteness of the object. Our framework elegantly overcomes these limitations by propagating valid semantic priors to meaningfully constrain and hallucinate the unseen regions and adaptively trusting only the reliable visible regions.

\section{\nickname{}}
\label{sec:method}
\begin{figure}[t]
\setlength{\abovecaptionskip}{ 2 pt}
\setlength{\belowcaptionskip}{ -2 pt}
\centering
\centerline{\includegraphics[width=1.\linewidth]{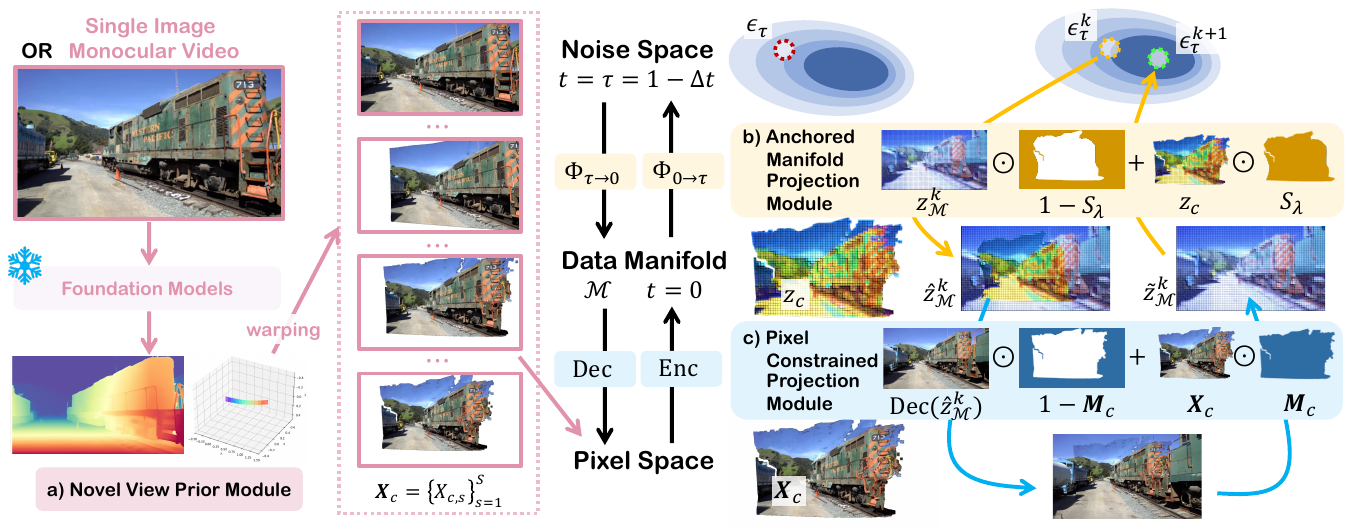}}
    \caption{An illustration of the three modules in our framework.}
    \label{fig:overview}
\end{figure}

We introduce a novel view synthesis (NVS) framework built on pre-trained flow-based video generation models, with a core manifold alternative projection scheme that unifies global semantic consistency from the video data manifold and pixel-level geometric constraints from the 3D warping prior, as shown in Figure \ref{fig:overview}. This section first covers the preliminaries of flow-based video generation (Section \ref{sec:method-preliminary}), then details our pipeline components: novel view prior construction (Section \ref{sec:method-prior}), anchored manifold projection (Section \ref{sec:method-amp}), pixel-constrained projection (Section \ref{sec:method-pcp}), and auxiliary trajectory re-anchoring (Section \ref{sec:method-reanchoring}).

\subsection{Preliminaries on Flow-Based Video Generation}\label{sec:method-preliminary}

Our framework is compatible with deterministic flow-based video generation models, including Flow Matching \cite{lipman2023flowmatching,liu2022flow} and Probability Flow ODE (PF-ODE) \cite{lu2022dpmsolver}. We formalize core components with compact notation below.

A pre-trained video Variational Autoencoder (VAE) \cite{rombach2022LDM} maps an input video $\boldsymbol{X}\in\mathbb{R}^{F\times H\times W\times3}$ to a low-dimensional latent space $\mathcal{Z}\in\mathbb{R}^{F_\mathcal{Z}\times H_\mathcal{Z}\times W_\mathcal{Z}\times d_\mathcal{Z}}$ via the encoder $\mathrm{Enc}(\cdot)$, and reconstructs it via the decoder $\mathrm{Dec}(\cdot)$. All semantically valid natural videos are mapped to a smooth low-dimensional data manifold $\mathcal{M}\in\mathcal{Z}$, which forms the support of the real video distribution $p_0(z)$. Latent codes on $\mathcal{M}$ are decoded to plausible, temporally consistent videos, while out-of-distribution (OOD) codes lie outside $\mathcal{M}$ and produce invalid content.

Flow-based video generation models learn a continuous vector field $v_t(z)$ governing latent code evolution over $t\in[0,1]$ from the standard Gaussian prior noise space $p_1(z)=\mathcal{N}(0,I)$ at $t=1$ to the data space $z\in\mathcal{M}$ at $t=0$, following the Ordinary Differential Equation (ODE):
\begin{equation}
    \frac{dz}{dt}=v(z,t).
\end{equation}

For simplicity, we denote the flow mapping $\Phi_{t\rightarrow s}(z)$ as the ODE solution mapping $z_t$ from time $t$ to $s$, with two core properties: \textbf{1) Forward (noising) flow}: $\Phi_{0\rightarrow t}(z_0)$ maps a data latent $z_o\in\mathcal{M}$ to a noised code $z_t$ as time $t$; \textbf{2) Reverse (denoising) flow}: $\Phi_{t\rightarrow 0}(z_t)$ maps a noised code $z_t$ back to the data manifold at $t=0$. For a well-trained model, the reverse flow maps any Gaussian sample at $t=1$ to a valid video latent. Our framework is agnostic to the vector field formulation, as long as the above flow properties hold.

\subsection{Novel View Prior Module}\label{sec:method-prior}

Our goal is to generate a multi-frame novel view video from a single monocular reference video $\boldsymbol{X}=\{X_f\}_{f=1}^F\in\mathbb{R}^{F\times H\times W\times3}$ (or a single image with $F=1$). We use depth and pose estimation foundation models such as VGGT \cite{wang2025vggt} and VIPE \cite{huang2025vipe} to jointly recover per-frame depth maps $\boldsymbol{D}=\{D_f\}_{f=1}^F$ and the corresponding source camera trajectory $\mathcal{C}_{src}=\{\mathcal{C}_f=(K_f,[\boldsymbol{R}_f\mid\boldsymbol{T}_f]\in\mathrm{SE}(3))\}_{f=1}^F$. Then we can define the given user-specified target camera trajectory in the same world coordinate as $\mathcal{C}=\{c_s=(K_s,[\boldsymbol{R}_s\mid\boldsymbol{T}_s])\}_{s=1}^S$. 

Following existing works \cite{you2025nvs}, we use a warping operator $\mathcal{W}$ to project the source video content, along with the corresponding depth and source poses, to each target view as follows:
\begin{equation}
    X_{c,s},M_s=\mathcal{W}(X_s,D_s,\mathcal{C}_s,c_s),
\end{equation}
and similarly for single-image input. This operation outputs a partial warped target frame $X_{c,s}$, along with a binary visibility mask $M_s$. Stacking the outputs across all target poses yields the full warped video prior $\boldsymbol{X}_c=\{X_{c,s}\}_{s=1}^S$ and its aligned visibility mask $\boldsymbol{M}_c=\{M_s\}_{s=1}^S$.

We map the warped prior $\boldsymbol{X}_c$ to the latent space via pre-trained VAE encoder to get the latent prior $z_c=\mathrm{Enc}(\boldsymbol{X}_c)$. In an ideal scenario where $z_c$ is a clean and valid latent lying strictly on the natural video data manifold $\mathcal{M}$, one could simply obtain an optimal initial noise at $t=1$ via the forward flow $z_1=\Phi_{0\rightarrow 1}(z_c)$. However, our constructed latent prior $z_c$ is inherently flawed: 1) visible regions of $z_c$ contain out-of-distribution artifacts from depth noise, occlusions, and view-dependent appearance changes; 2) unobserved regions are uninformative and unconstrained, \ie, black or empty in invisible pixel space, providing no valid semantic or geometric cues to align generated content with the source scene.

Consequently, directly utilizing this initial noise $z_1$ yields corrupted generation trajectories and is therefore insufficient. Instead, we define a near-noise starting state $z_\tau=\Phi_{0\rightarrow \tau}(z_c)$ at an early timestep $t=\tau=1-\Delta t$ via the forward flow, and then iteratively refine it to find an optimal initial noise that balances both global plausibility and precise view alignment.

\subsection{Anchored Manifold Projection Module}\label{sec:method-amp}

To address this, we propose the \textbf{Anchored Manifold Projection (AMP)} module as the first component of our alternating projection framework. Intuitively, AMP serves two critical functions: 1) it projects plausible geometric and semantic information into the invisible unconstrained regions, thus enabling plausible completion; 2) it searches for a latent state with higher quality on the manifold $\mathcal{M}$ around the flowed state $z_c$, fixing the artifacts in the visible regions while being anchored by the global constraints. We achieve these two goals through the following procedures. 

Given the current initial noise prior $z_\tau$ from Section \ref{sec:method-prior}, we denote it as our starting initial state $z_\tau^0$ and iteratively update it to optimal points $z_\tau^*=z_\tau^K$ by $K$ iterations. For each iteration $k$, we first use reverse flow to project the initial state $z_\tau^k$ toward the data manifold as $z_\mathcal{M}^k$. To bypass the computational bottleneck of full-step ODE integration, we employ a one-step large-stride Euler approximation of the generative reverse flow as follows:
\begin{equation}
    z_\mathcal{M}^k=\Phi_{\tau\rightarrow 0}(z_\tau^k)\approx z_\tau^k-\tau v_\theta(z_\tau^k,\tau),
\end{equation}
where $v_\theta(x,t)$ is the flow field function of the pre-trained video generation model. By passing the state through the pre-trained vector field $v_\theta$, the model naturally leverages its learned spatial-temporal prior to broadcast the information from the visible region to the invisible region, pushing $z_\mathcal{M}^k$ one-step closer to $\mathcal{M}$. 

However, since spatial-temporal attention in vector field $v_\theta$ also broadcasts noisy information from unobserved regions to the observed region, some of which is good and some of which is harmful, this unconstrained forward flow inevitably alters the highly confident visible regions. To strictly enforce the view prior, we need to anchor the estimated state $z_\mathcal{M}^k$ with the given prior constraint $z_c$. Intuitively, if one latent in $z_\mathcal{M}^k$ is similar to one in $z_c$, it means that the generation model $v_\theta$ tends not to fix it, which implies that it is a stable high-quality prior. Thus, we choose to believe this given prior. By contrast, if the latent is very different from $z_c$, it means the generation model tends to fix it significantly, which implies this latent tends to contain artifacts. Therefore, based on this intuition, we perform a latent blend operation to yield a harmonized state:
\begin{equation}
\label{eqn:amp-blending}
    \hat{z}_\mathcal{M}^k=S_\lambda\odot z_c + (1-S_\lambda)\odot z_\mathcal{M}^k,
\end{equation}
where $S_\lambda$ is the feature-level similarity mask with threshold $\lambda$, defined as
\begin{equation}
    S_\lambda=\mathbbm{1}(\mathrm{CosSim}(z_\mathcal{M}^k,z_c)>\lambda)\in\{0,1\}^{F_\mathcal{Z}\times H_\mathcal{Z}\times W_\mathcal{Z}}.
\end{equation}

This dynamic latent blending preserves the model's high-quality hallucinations in artifact regions while forcefully overwriting the visible geometry back to the NVS prior $z_c$. Finally, we map this harmonized latent back to the near-noise timestamp $\tau$ again via the direct forward process:
\begin{equation}
\label{eqn:amp-add-noise}
    z_\tau^{k+1}=\Phi_{0\rightarrow\tau}(\hat{z}_\mathcal{M}^k)=(1-\tau)\cdot\hat{z}_\mathcal{M}^k+\tau\cdot\epsilon,
\end{equation}
where $\epsilon\sim\mathcal{N}(0,I)$ is a random standard Gaussian noise.

\begin{figure}[t]
\setlength{\abovecaptionskip}{ 2 pt}
\setlength{\belowcaptionskip}{ -2 pt}
\centering
\centerline{\includegraphics[width=1.\linewidth]{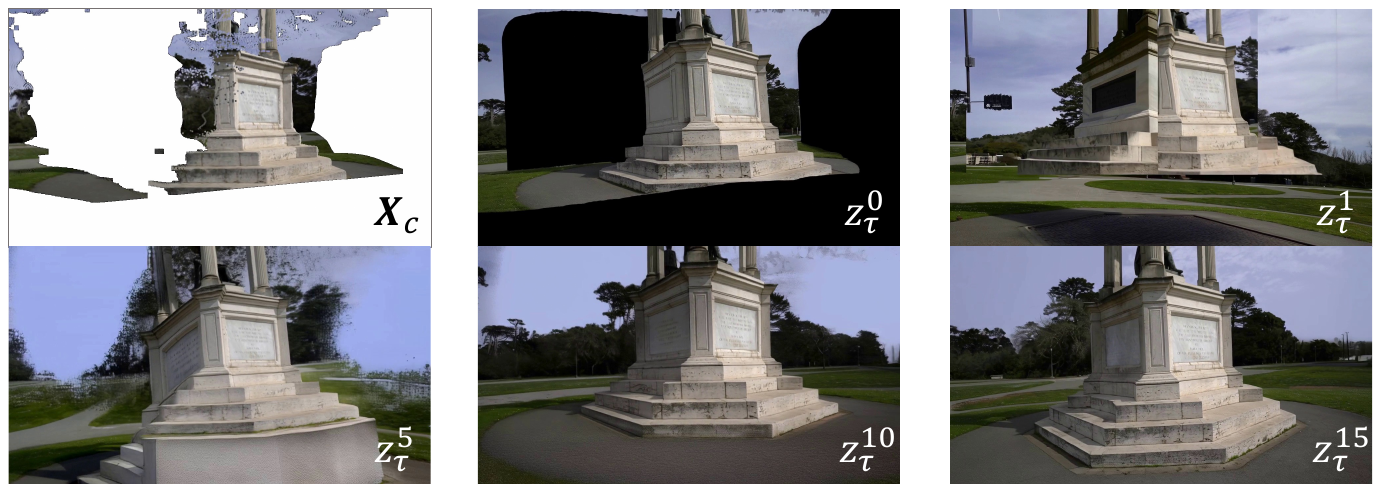}}
    \caption{Qualitative examples show that our alternative projections iteratively refine the noise quality, leading the generated results to higher-fidelity, better view consistency, and more continuous semantics.}
    \label{fig:alternative-qualitative}
\end{figure}

\subsection{Pixel-Constrained Projection Module}\label{sec:method-pcp}

While the AMP module tries to project the state $\hat{z}_\mathcal{M}^k$ toward the data manifold, the similarity mask in the early iterations could be too sparse due to excessively strong noise from unconstrained parts, causing only a small fraction of the latent codes to be successfully anchored. Furthermore, even when latents are successfully anchored, the highly compressed and imperfectly disentangled nature of the VAE latent space $\mathcal{Z}$ means that latent-level blending cannot fully substitute for image-space blending. Latent blending inevitably induces spatial bleeding and global structural shifts, pushing the latent code slightly off the constraints. To solve these problems, we introduce the \textbf{Pixel-Constrained Projection (PCP)} module as the second component of our alternating projection framework. 

After getting $\hat{z}_\mathcal{M}^k$ from Equation \ref{eqn:amp-blending}, instead of directly mapping it back to noise space using Equation \ref{eqn:amp-add-noise}, we decode it to the interpretable pixel space to blend the prior $\boldsymbol{X}_C$ according to our precise binary visibility mask $\boldsymbol{M}_c$ and then encode it back to the latent space via the pre-trained VAE. Mathematically, it is defined as 
\begin{equation}
    \tilde{z}_\mathcal{M}^k = \mathrm{Enc}(\boldsymbol{M}_c\odot\boldsymbol{X}_c+(1-\boldsymbol{M}_c)\odot\mathrm{Dec(\hat{z}_\mathcal{M}^k)}),
\end{equation}
ensuring that every observable pixel from the source view remains geometrically immutable. Finally, we map $\tilde{z}_\mathcal{M}^k$ back to the noise space using Equation \ref{eqn:amp-add-noise}.

\paragraph{Alternating Projection Formulation:} Our proposed pipeline can be theoretically formulated as Alternating Projections onto Sets. A single AMP pass pulls the state toward the natural video manifold $\mathcal{M}$, but its aggressive approximation relaxes spatial constraints; whereas a single PCP pass forcefully projects the state back onto the strict geometric constraint subspace $\mathcal{V}$, but introduces VAE encoding artifacts that push it away from $\mathcal{M}$. Since neither projection independently satisfies both conditions, as shown in Figure \ref{fig:alternative-qualitative}, using the manifold alternating projection iteratively between AMP and PCP as
\begin{equation}
    \ldots\rightarrow z_\tau^k\xrightarrow{\mathrm{AMP_{stage\ 1}}}\hat{z}_\mathcal{M}^k\xrightarrow{\mathrm{PCP}}\tilde{z}_\mathcal{M}^k\xrightarrow{\mathrm{AMP_{stage\ 2}}} z_\tau^{k+1}\rightarrow\ldots
\end{equation}
monotonically leads the optimal latent state towards $\mathcal{M}\cap\mathcal{V}$. The convergence proof is provided in Appendix A.

\subsection{Auxiliary Trajectory Re-anchoring}\label{sec:method-reanchoring}

While the alternating projection pipeline successfully yields an optimized initial noise $z_\tau^*$, the usage of large-stride linear Euler approximation inevitably introduces integration errors. Because the learned vector field $v_\theta$ is highly non-linear, deploying a standard ODE solver initialized directly from $z_\tau^*$ may cause the generation trajectory to gradually drift away from the target geometric constraints. 

Fortunately, extensive literature \cite{chefer2025videojam,flowmo,jang2026frame} on diffusion and flow-based generative models demonstrates that the global structure, semantic layout, and low-frequency content of a generated sample are predominantly determined during the earliest generation steps. Consequently, we only need to explicitly constrain the trajectory during these initial iterations to guarantee global view alignment. As discussed in Section \ref{sec:method-amp}, an AMP module ensuring the intermediate manifold prediction is dynamically anchored to the reliable geometric priors. Therefore, at a new denoise timestamp $t=\tau'<\tau$, we can simply using the AMP module on $z_{\tau'}$ by treating the current intermediate latent $z_{\tau'}$ as a new initialization point to re-anchor the denoising trajectory to our NVS constraints.

\section{Experiments}
\label{sec:exp}

\textbf{Implementation details:} we implement our method based on the recent video generation model Wan2.2-I2V-A14B \cite{wan2025}. The whole denoising steps are set to be 15, and we use time shift 3.5 to control the generation pace. All other configurations are kept as the original setting and its vanilla UNIPC solver \cite{zhao2023unipc} is used to perform denoising. All our hyperparameters are discussed in ablation study in Section \ref{sec:ablation} and more implementation details are in Appendix B.

\textbf{Datasets:} We evaluate our method for static single-image novel view synthesis on two public datasets: 1) \textbf{Tanks-and-Temples dataset} (TNT)\cite{Knapitsch2017}: This dataset contains both indoor and outdoor scenes undergoing large global view changes. Following \cite{chen2025flexworld}, we randomly sampled valid video clips from 14 scenes of its test sets, in which we only pick out 49 valid scenes with reasonable camera trajectories for evaluation, while ignoring those scenes with totally unconstrained views, \ie, there exist frames totally empty after warping the first frame. Notably, since there is no official camera pose given, we use VGGT \cite{wang2025vggt} with calibration by COLMAP \cite{pan2024glomap} to recover camera trajectories. 2) \textbf{LLFF dataset} \cite{mildenhall2019llff}: this dataset contains 8 high-quality scenes, undergoing fast forward-facing camera motions. The high-frequency periodic appearance patterns and high-speed camera motions make it extremely challenging for NVS task. 

We evaluate our method for monocular video novel view synthesis on the 3) \textbf{DAVIS dataset} \cite{davis}: this dataset contains 89 monocular videos, each featuring motions varying from human behaviors to natural environments. We use VIPE \cite{huang2025vipe} to reconstruct the camera trajectory for each scene and compose it with a manually designed relative trajectory (following the protocol of ReCamMaster \cite{bai2025recammaster}) as our target trajectory. More datasets details are provided in Appendix D.

\begin{table}[t!]\tabcolsep=0.05cm  
\centering
\caption{Quantitative results of all methods for visual quality and camera accuracy on Tanks-and-Temples and LLFF datasets.} 
\resizebox{1\linewidth}{!}{
\begin{tabular}{r|cccccc|ccc}
\toprule 
 & \multicolumn{9}{c}{Tanks and Temples} \\
 \cmidrule{2-10}
 & \multicolumn{6}{c|}{Visual Quality} & \multicolumn{3}{c}{Camera Accuracy} \\
 \cmidrule{2-10}
 & PSNR$\uparrow$ & SSIM $\uparrow$ & LPIPS$\downarrow$ & FID-192$\downarrow$ & FID-2048$\downarrow$ & CLIP-S$\uparrow$ & ATE$\downarrow$ & RPE-T$\downarrow$ & RPE-R$\downarrow$ \\
 \midrule
TrajectoryAttention \cite{trajattn} & 13.064 & 0.577 & 0.522 & 14.377 & 151.418 & 0.899 & 0.061 & 0.007 & 0.010\\
ReCamMaster \cite{bai2025recammaster} & 11.549 & 0.533 & 0.594 & 9.542 & 139.487 & 0.903 & 0.099 & 0.004 & 0.006\\
ViewCrafter \cite{yu2024viewcrafter} & 13.675 & 0.590 & 0.564 & 13.748 & 133.219 & 0.902 & 0.010 & 0.004 & \bronze{0.003}\\
FlexWorld \cite{chen2025flexworld} & \bronze{14.655} & \gold{0.601} & 0.531 & \bronze{4.941} & \bronze{92.686} & \bronze{0.918} & \gold{0.005} & \gold{0.002} & \silver{0.002}\\
NVS-Solver \cite{you2025nvs} & 12.356 & 0.547 & 0.548 & 6.376 & 131.179 & 0.892 & 0.079 & 0.006 & 0.012\\
LanPaint \cite{zheng2025lanpaint} & 13.299 & 0.544 & 0.575 & 13.254 & 196.341 & 0.834 & \silver{0.006} & \gold{0.002} & \gold{0.001}\\
Reversed Noise \cite{liu2022flow} & 10.715 & 0.435 & \bronze{0.520} & 17.275 & 160.994 & 0.902 & 0.029 & {0.003} & 0.005\\
Flow Warped Noise \cite{hiwyn} & 11.353 & 0.510 & 0.552 & 6.671 & 130.484 & 0.909 & 0.080 & 0.007 & 0.013\\
\midrule
\textbf{\nickname{} (Wan2.1)} & \silver{14.876} & \bronze{0.591} & \gold{0.481} & \silver{4.735} & \silver{89.527} & \silver{0.922} & 0.010 & \gold{0.002} & 0.004\\
\textbf{\nickname{} (Ours)} & \gold{15.250} & \silver{0.596} & \silver{0.486} & \gold{3.712} & \gold{83.341} & \gold{0.935} & \bronze{0.008} & \gold{0.002} & \bronze{0.003} \\[+0.1em]
\toprule
 & \multicolumn{9}{c}{LLFF} \\
 \cmidrule{2-10}
 & \multicolumn{6}{c|}{Visual Quality} & \multicolumn{3}{c}{Camera Accuracy} \\
 \cmidrule{2-10}
 & PSNR$\uparrow$ & SSIM $\uparrow$ & LPIPS$\downarrow$ & FID-192$\downarrow$ & FID-2048$\downarrow$ & CLIP-S$\uparrow$ & ATE$\downarrow$ & RPE-T$\downarrow$ & RPE-R$\downarrow$ \\
 \midrule
TrajectoryAttention \cite{trajattn} & \bronze{12.557} & \bronze{0.356} & \silver{0.498} & 19.930 & \bronze{123.126} & 0.923 & 1.644 & 0.481 & 0.045\\
ReCamMaster \cite{bai2025recammaster} & 10.928 & 0.312 & 0.561 & 10.994 & 173.737 & 0.885 & 1.300 & 0.500 & 0.042\\
ViewCrafter \cite{yu2024viewcrafter} & 11.573 & 0.344 & 0.594 & 22.091 & 135.944 & 0.907 & 0.864 & \silver{0.470} & \bronze{0.035}\\
FlexWorld \cite{chen2025flexworld} & \silver{13.119} & \gold{0.378} & 0.538 & 12.549 & \silver{97.500} & \bronze{0.930} & \bronze{0.618} & \silver{0.470} & \silver{0.031}\\
NVS-Solver \cite{you2025nvs} & 11.418 & 0.304 & 0.531 & \silver{6.739} & 150.368 & 0.895 & 1.925 & 0.483 & 0.054\\
LanPaint \cite{zheng2025lanpaint} & 11.946 & 0.334 & 0.506 & 13.021 & 196.799 & 0.862 & \gold{0.543} & \gold{0.449} & \gold{0.013}\\
Reversed Noise \cite{liu2022flow} & 10.866 & 0.287 & \bronze{0.505} & \bronze{7.221} & 148.187 & 0.923 & 1.142 & 0.489 & 0.040\\
Flow Warped Noise \cite{hiwyn} & 10.678 & 0.261 & 0.538 & 19.483 & 130.105 & \silver{0.941} & 2.011 & 0.501 & 0.043\\
\midrule
\textbf{\nickname{} (Ours)} & \gold{13.385} & \silver{0.360} & \gold{0.432} & \gold{3.586} & \gold{80.888} & \gold{0.953} & \silver{0.551} & \bronze{0.476} & \silver{0.031}\\[+0.1em] 
\bottomrule
\end{tabular}
}
\label{tab:static}
\vspace{-0.2cm}
\end{table}

\textbf{Baselines:} We compare with 3 groups of baselines: 1) camera-controlled video generation method TrajectoryAttention \cite{trajattn} and ReCamMaster \cite{bai2025recammaster}; 2) methods under warping and inpainting scheme, including task-specific-trained methods ViewCrafter \cite{yu2024viewcrafter} and FlexWorld \cite{chen2025flexworld}, and training-free methods NVS-Solver \cite{you2025nvs}. In order to show the gap between NVS and traditional inpainting problem, we also include powerful training-free inpainting method LanPaint \cite{zheng2025lanpaint}; 3) noise handling methods, including the direct reversed noise via the rectified-flow forward process \cite{liu2022flow} (generating with $z_\tau$ directly) and flow warped noise following Go-With-The-Flow \cite{gowiththeflow}. For all the baselines, we unify the same depth and camera priors and align the scale as required by different models. For training-free methods LanPaint, reversed noise, and flow warped noise, we adapt them to the same generation backbone Wan2.2-I2V-A14B \cite{wan2025} as ours. More implementation details about baselines are shown in Appendix E. 

\textbf{Metrics:} Following \cite{chen2025flexworld, bai2025recammaster, you2025nvs}, we evaluate NVS in both visual quality and camera accuracy. 1) For visual quality, we compare fidelity and coherence with text, quantified with PSNR, SSIM \cite{ssim}, LPIPS \cite{lpips}, Frechet Image Distance \cite{heusel2017fid} (FID), Frechet Video Distance \cite{fvd} (FVD), and CLIP Similarity (CLIP-S), respectively. We note that baselines perform differently with different feature dimensions for FID, so we report both FID-192 and FID-2048. CLIP Similarity refers to the average of per-frame CLIP \cite{clip} similarity between generated frames and ground-truth frames. 2) For camera accuracy, we report Absolute Trajectory Error (ATE) \cite{Goel1999RobustLU} and Relative Pose Error for translation (RPE-T) and rotation (RPE-R) \cite{Goel1999RobustLU}. More details about metrics implementation are in Appendix F.

\begin{figure}[t]
\setlength{\abovecaptionskip}{ 2 pt}
\setlength{\belowcaptionskip}{ -2 pt}
\centering
\centerline{\includegraphics[width=1.\linewidth]{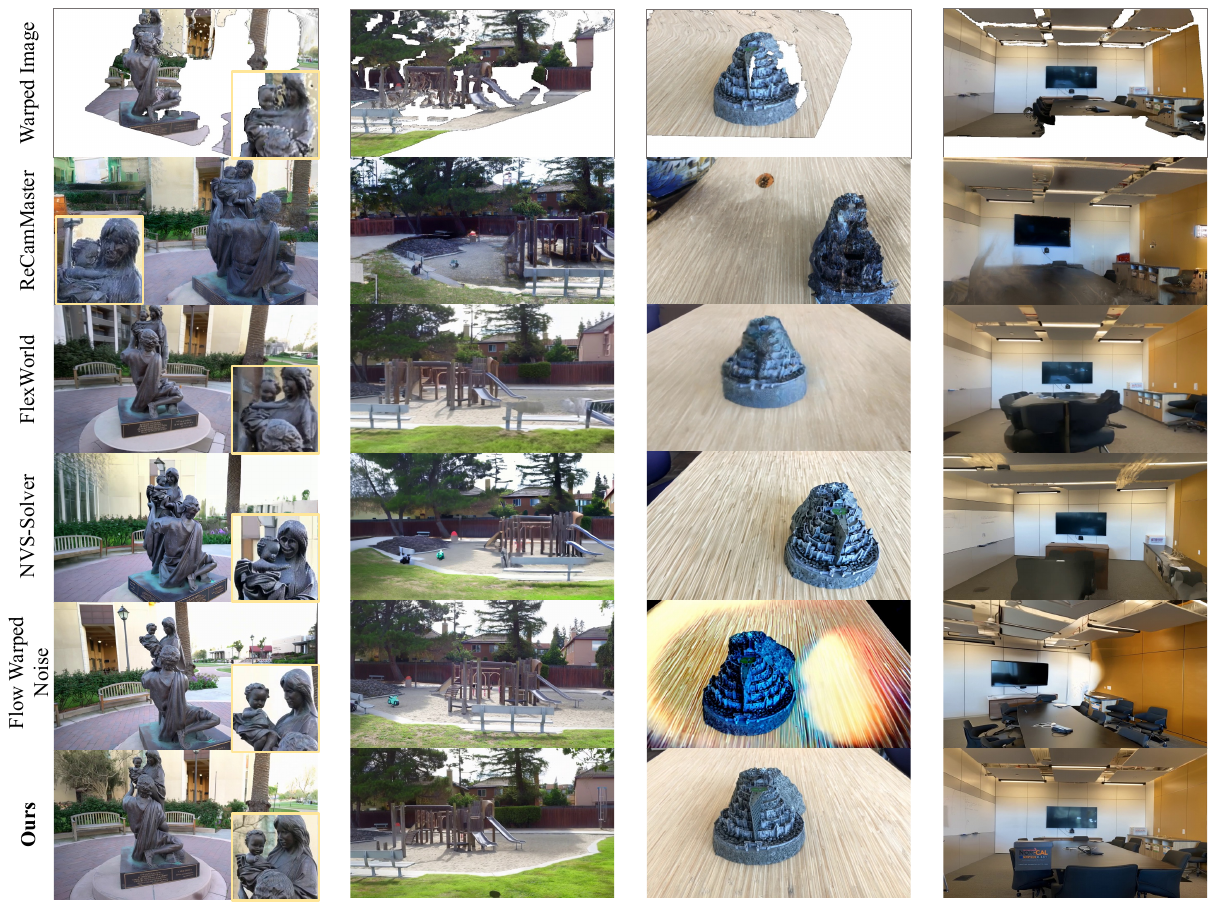}}
    \caption{Qualitative results of top 4 baselines and our methods on Tanks-and-Temples dataset and LLFF dataset.}
    \label{fig:qual-static}\vspace{-0.3cm}
\end{figure}

\vspace{-0.2cm}
\subsection{Static Single-image Novel View Synthesis}

As reported in Table \ref{tab:static}, our proposed \nickname{} framework achieves state-of-the-art performance in both visual quality and semantic consistency across the Tanks-and-Temples and LLFF datasets, securing the best scores in almost all perceptual metrics, including PSNR, LPIPS, FID, and CLIP-S. The quantitative comparisons clearly demonstrate that merely manipulating the initial noise for visible regions, as attempted by baselines such as Reversed Noise and Flow Warped Noise, is fundamentally insufficient to yield reasonable novel view synthesis, which is reflected in their substantial performance drops. Alongside these superior visual results, \nickname{} maintains commendable camera control accuracy that rivals heavily constrained optimization methods. These quantitative gains are strongly corroborated by our qualitative results as shown in Figure \ref{fig:qual-static}: baseline models  stubbornly retain warping noise due to an absolute over-reliance on the flawed geometric prior, whereas our alternating projection approach robustly corrects visible artifacts while seamlessly hallucinating geometrically coherent content in unseen areas.

\begin{table}[t!]\tabcolsep=0.05cm  
\centering
\caption{Quantitative results of all methods for dynamic monocular video novel view synthesis on DAVIS dataset.} 
\resizebox{0.9\linewidth}{!}{
\begin{tabular}{r|ccc|ccc}
\toprule 
 & \multicolumn{3}{c|}{Visual Quality} & \multicolumn{3}{c}{Camera Accuracy} \\
 \cmidrule{2-7}
 & FID-192$\downarrow$ & FID-2048$\downarrow$ & FVD$\downarrow$ & ATE$\downarrow$ & RPE-T$\downarrow$ & RPE-R$\downarrow$ \\
 \midrule
TrajectoryAttention \cite{trajattn} & 15.320 & 115.751 & 496.158 & 1.166 & 0.190 & 0.023\\
ReCamMaster \cite{bai2025recammaster} & 3.772 & 115.584 & 532.946 & \silver{0.709} & \silver{0.078} & \silver{0.012}\\
ViewCrafter \cite{yu2024viewcrafter} & 12.604 & 127.991 & 490.705 & \bronze{0.794} & 0.111 & 0.014\\
FlexWorld \cite{chen2025flexworld} & 2.948 & \silver{77.179} & \silver{243.930} & 0.805 & 0.101 & \bronze{0.013}\\
NVS-Solver \cite{you2025nvs} & \bronze{2.654} & \bronze{81.553} & 341.127 & 0.996 & 0.159 & 0.021\\
LanPaint \cite{zheng2025lanpaint} & \silver{2.436} & 92.367 & \bronze{258.957} & 1.052 & 0.132 & \gold{0.011}\\
Reversed Noise \cite{liu2022flow} & 3.821 & 92.280 & 321.535 & 0.798 & \bronze{0.090} & \bronze{0.013}\\
Flow Warped Noise \cite{hiwyn} & 14.969 & 106.699 & 479.094 & 1.815 & 0.209 & 0.030\\
\midrule
\textbf{\nickname{} (Ours)} & \gold{1.544} & \gold{68.864} & \gold{234.634} & \gold{0.534} & \gold{0.067} & \gold{0.011}\\[+0.1em] 
\bottomrule
\end{tabular}
}
\label{tab:dynamic}
\vspace{-0.5cm}
\end{table}

\vspace{-0.3cm}
\subsection{Dynamic Monocular Video Novel View Synthesis}

As shown in Table \ref{tab:dynamic}, our \nickname{} framework consistently achieves state-of-the-art performance in dynamic monocular video novel view synthesis on the DAVIS dataset, demonstrating optimal results across visual quality and camera control metrics. In complex dynamic scenes, the entanglement of global camera motion and independent object movement frequently pushes baseline models into out-of-distribution (OOD) states, allowing our method to exhibit an even more pronounced advantage here than in relatively static NVS scenarios. Furthermore, the intricate spatial-temporal correlations introduced by these complex motions often cause baseline models to severely misjudge camera trajectories, leading to a noticeable degradation in their camera control accuracy, whereas our method maintains highly robust and precise camera conditioning with significant leads in ATE and RPE metrics. As shown by further qualitative results in Figure \ref{fig:qual-dynamic}, since depth estimation in dynamic scenes is notoriously inferior to that in static environments, frequently introducing severe warping deformations and noise, baseline methods are highly susceptible to pronounced object distortion and semantic deviation of the main subjects, whereas ours effectively mitigates these geometric artifacts and preserves the structural integrity of the moving subjects.

\begin{figure}[t]
\setlength{\abovecaptionskip}{ 2 pt}
\setlength{\belowcaptionskip}{ -2 pt}
\centering
\centerline{\includegraphics[width=1.\linewidth]{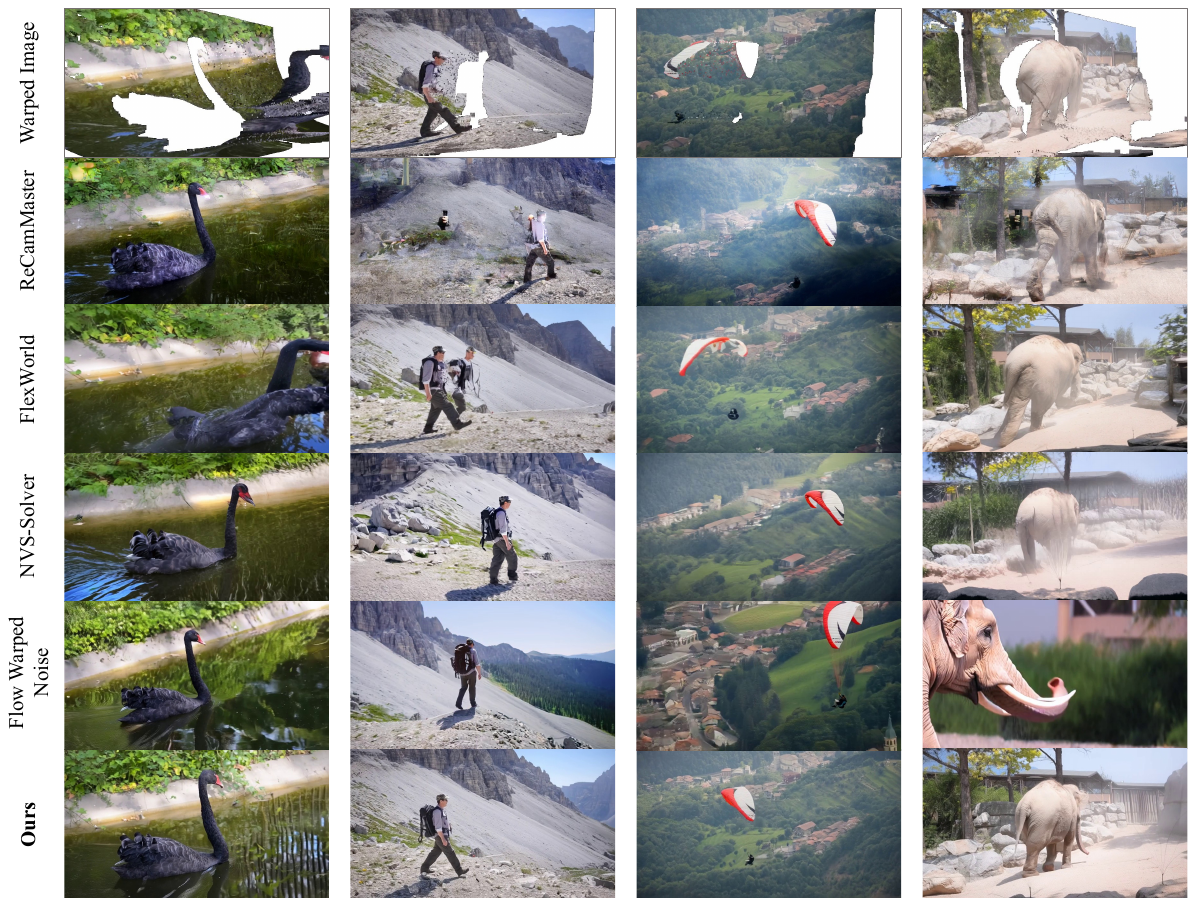}}
    \caption{Qualitative results of top 4 baselines and our methods on DAVIS dataset.}
    \label{fig:qual-dynamic}\vspace{-0.4cm}
\end{figure}

\vspace{-0.3cm}
\subsection{Ablation Study}
\label{sec:ablation}

To evaluate the effectiveness of each component and hyperparameter choices, we conduct the following ablation experiments on the Tanks-and-Temples dataset.

\textbf{(1) Removing all the modules:} We remove all the proposed modules to show the vanilla backbone performances. The method degenerates to generating with $z_\tau$, which is equivalent to a reversed noise baseline. 

\textbf{(2) The number of alternating projection iterations $K$:} we choose $K=15$ in our main experiments, and we ablate the performance with $K=5$ and $K=10$. 

\textbf{(3) The threshold used in the similarity mask in AMP:} we choose $\lambda=0.8$ in our main experiments, and we ablate the performance with $\lambda=0.7$ and $\lambda=0.9$.

\textbf{(4) Removing AMP:} we remove the AMP module.

\textbf{(5) Removing PCP:} we remove the PCP module. 

\textbf{(6) Applying PCP in latent space:} we remove the VAE mappings in the PCP module. Instead, we interpolate the visibility mask $\boldsymbol{M}$ into the size of the latent space as $\boldsymbol{M_\mathcal{Z}}$, and blend the latent state $\hat{z}_\mathcal{M}^k$ with $z_c$ directly as $\tilde{z}_\mathcal{M}^k=\boldsymbol{M_\mathcal{Z}}\odot z_c+(1-\boldsymbol{M_\mathcal{Z}})\odot \hat{z}_\mathcal{M}^k$.

\textbf{(7) The number of denoising steps to apply auxiliary trajectory re-anchoring:} we use the trajectory re-anchoring in the first 3 denoising steps in our main experiments, while we ablate in the first 0, 5, and 8 steps here. 

\textbf{(8) Adding PCP in the trajectory re-anchoring stage:}

As detailed in Table \ref{tab:ablation}, our proposed framework delivers massive performance gains over the naive baseline, with visual metrics steadily converging as the number of alternating projection iterations ($K$) increases. Our constraint mechanisms prove critical for maintaining geometric integrity: enforcing absolute pixel-space boundaries via PCP is irreplaceable, as removing it or applying it solely in the VAE latent space induces spatial bleeding and severe camera decoupling. Within AMP, the similarity threshold ($\lambda$) requires delicate balancing; overly low values admit low-quality warping artifacts, while excessively high values discard too many reliable priors, similarly leading to geometric decoupling and overall view shifts. Finally, the auxiliary trajectory re-anchoring introduces a crucial trade-off between raw visual fidelity and precise camera alignment. While omitting it entirely yields the highest raw image quality, it suffers from noticeable trajectory drift; applying an optimal 3 steps effectively calibrates the camera alignment with negligible visual loss, whereas excessive re-anchoring aggressively harms image quality without yielding further trajectory benefits.

\begin{table}[t!]\tabcolsep=0.05cm  
\centering
\caption{Quantitative results of all ablation studies on Tanks-and-Temples datasets.} 
\resizebox{1\linewidth}{!}{
\begin{tabular}{cccccc|ccc|ccc}
\toprule 
\multicolumn{6}{c|}{ }& \multicolumn{3}{c|}{Visual Quality} & \multicolumn{3}{c}{Camera Accuracy} \\[+0.1em] 
\cmidrule{7-12}
& $K$ & $\lambda$ & AMP & PCP & Re-anchor & FID-192$\downarrow$ & FID-2048$\downarrow$ & CLIP-S$\uparrow$ & ATE$\downarrow$ & RPE-T$\downarrow$ & RPE-R$\downarrow$ \\
 \midrule
1) & - & - & \N & \N & - & 17.275 & 160.994 & 0.902 & 0.029 & 0.003 & 0.005\\
2) & 5 & 0.8 & \Y & \Y & 3 & 4.864 & 96.359 & 0.923 & 0.010 & 0.002 & 0.003\\
2) & 10 & 0.8 & \Y & \Y & 3 & 4.774 & 88.310 & 0.930 & 0.008 & 0.002 & 0.003\\
3) & 15 & 0.7 & \Y & \Y & 3 & 5.240 & 92.013 & 0.926 & 0.010 & 0.002 & 0.003\\
3) & 15 & 0.9 & \Y & \Y & 3 & \bronze{3.976} & 88.237 & \bronze{0.933} & 0.022 & 0.002 & 0.004\\
4) & 15 & 0.8 & \N & \Y & 3 & 4.943 & 91.215 & 0.929 & 0.012 & 0.002 & 0.003 \\
5) & 15 & 0.8 & \Y & \N & 3 & 5.839 & 115.214 & 0.925 & 0.089 & 0.005 & 0.011\\
6) & 15 & 0.8 & \Y & Latent & 3 & 5.661 & 93.343 & 0.928 & 0.011 & 0.002 & 0.003\\
\midrule
7) & 15 & 0.8 & \Y & \Y & 0 & \gold{2.773} & \silver{83.750} & \gold{0.940} & 0.017 & 0.002 & 0.005\\
7) & 15 & 0.8 & \Y & \Y & 5 & 4.674 & \bronze{86.026} & 0.932 & 0.008 & 0.002 & 0.003\\
7) & 15 & 0.8 & \Y & \Y & 8 & 6.326 & 96.721 & 0.926 & {0.007} & 0.002 & 0.002\\
8) & 15 & 0.8 & \Y & \Y & PCP & 6.893 & 107.733 & 0.909 & {0.006} & 0.002 & 0.002\\
\midrule
\textbf{Full} & 15 & 0.8 & \Y & \Y & 3 & \silver{3.712} & \gold{83.341} & \silver{0.935} & 0.008 & 0.002 & 0.003\\[+0.1em] 

\bottomrule
\end{tabular}
}
\label{tab:ablation}
\vspace{-0.3cm}
\end{table}

\subsection{Further Application}

In addition to the above NVS application, we further show the ability of our methods on video-to-video (V2V) future extrapolation. Given a sequence of video frames, we directly mark them as visible part in our framework, while frames in future unobserved time as invisible part. Based on these given video priors, we directly run our method on it, which unifies training-free V2V future extrapolation task and NVS task. We show qualitative results in some selected scenes from DAVIS \cite{davis} and the training set of PhysInOne \cite{zhou2026physinone} as shown in Figure \ref{fig:future-extrap}.

\begin{figure}[t]
\centering
\centerline{\includegraphics[width=1.\linewidth]{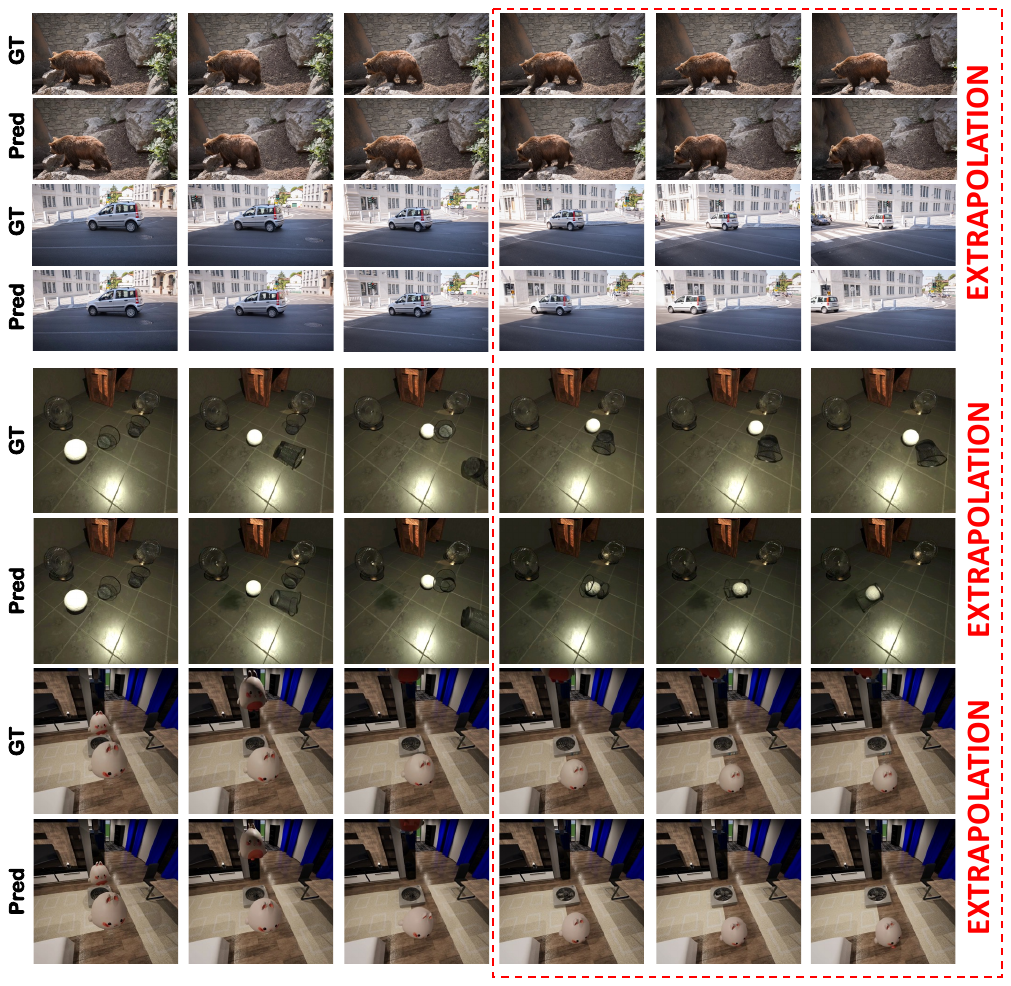}}
    \caption{Qualitative results of video-to-video future extrapolation on DAVIS dataset (upper two scenes) and PhysInOne dataset (bottom two scenes). Our method can provide smooth motion prediction based on the given videos.}
    \label{fig:future-extrap}\vspace{-0.4cm}
\end{figure}

\section{Conclusion}
\label{sec:conclusion}
We present a novel entirely training-free framework that addresses the inherent challenges of severe occlusions and depth ambiguity in monocular novel view synthesis (NVS). Rather than relying on expensive task-specific fine-tuning or heuristic generation guidance, we reframed the NVS problem as discovering the optimal initial noise latent within the native output data manifold of a pre-trained video generation model. Our core contribution lies in the manifold alternating projection scheme: iterating between Anchored Manifold Projection (AMP) for plausible semantic completion and Pixel-Constrained Projection (PCP) for exact geometric boundary enforcement. This mathematically guaranties convergence to a solution that perfectly intersects global semantic realism with precise 3D view alignment. Extensive experiments demonstrate that our method significantly advances state-of-the-art visual quality and geometric consistency for both static image and dynamic video inputs, establishing a robust new paradigm for training-free novel view synthesis. \\

\textbf{Acknowledgments}:
This work was supported in part by Research Grants Council of Hong Kong under Grants 15225522 \& 15219125 \&15228626, and in part by National Natural Science Foundation of China under Grant 62271431.



%
%
\bibliographystyle{splncs04}
\bibliography{references}


\appendix
\section{Proof of the Convergence}\label{app:proof_converge}

In this appendix, we provide a rigorous mathematical proof for the convergence of our proposed \nickname{} framework. We first formalize the core notations and regularity assumptions aligned with the main paper, then we prove the key theorem: the alternating projection iterations converge to the intersection of the natural video data manifold and the NVS geometric constraint set.

\subsection{Notations and Preliminaries}
We first restate and extend the core notations from the main paper for completeness, followed by standard regularity assumptions that hold for well-trained flow-based video generation models.

\textbf{Core Notations:}
\begin{itemize}
    \item $\mathcal{Z}$: The latent space of the pre-trained video VAE, with inner-product-induced norm $\|\cdot\|_2$.
    \item $\mathcal{M}\subseteq\mathcal{Z}$: The smooth, closed, and connected natural video data manifold, which is the support of the real video distribution learned by the pre-trained model. All valid latent codes on $\mathcal{M}$ can be decoded to photorealistic, temporally consistent videos.
    \item $\mathcal{V\subseteq\mathcal{Z}}$: The closed convex set of NVS geometric constraints, defined as
    \begin{equation}
        \mathcal{V}=\{z\in\mathcal{Z}\mid\boldsymbol{M}_c\odot\mathrm{Dec}(z)=\boldsymbol{M}_c\odot\boldsymbol{X}_c\},
    \end{equation}
    where $\boldsymbol{X}_c$ is the warped novel view prior, $\boldsymbol{M}_c$ is the binary visibility mask, and $\odot$ denotes element-wise multiplication.
    \item ODE and flow mappings: The reverse flow ODE of the pre-trained video generation model is
    \begin{equation}
        \frac{dz(t)}{dt}=v_\theta(z(t),t),\quad t\in[0,1],
    \end{equation}
    where $v_\theta(z,t)$ is the learned vector field, $t=1$ corresponds to the standard Gaussian noise space, and $t=0$ corresponds to the data manifold $\mathcal{M}$. We denote:
    \begin{itemize}
        \item $\Phi_{t\rightarrow s}(z)$: the exact ODE solution mapping a latent code $z_t$ at time $t$ to $z_s$ as time s.
        \item $\hat{\Phi}_{\tau\rightarrow 0}(z_\tau)=z_\tau - \tau v_\theta(z_\tau,\tau)$: the one-step Euler approximation of the reverse flow $\Phi_{\tau \rightarrow 0}$ used in our AMP module, where $\tau=1-\Delta t$ is the near-noise timestamp.
    \end{itemize}
    \item Projection operators:
    \begin{itemize}
        \item $P_\mathcal{V}: \mathcal{Z}\rightarrow \mathcal{V}$: the orthogonal projection onto the constraint set $\mathcal{V}$, which exactly corresponds to our PCP module:
        \begin{equation}
            P_\mathcal{V}(z)=\mathrm{Enc}(\boldsymbol{M}_c\odot \boldsymbol{X}_c+(1-\boldsymbol{M}_c)\odot \mathrm{Dec}(z)).
        \end{equation}
        \item $P_\mathcal{M}: \mathcal{Z}\rightarrow \mathcal{M}$: the projection onto the data manifold $\mathcal{M}$, implemented by our AMP module with dynamic feature anchoring.
    \end{itemize}
    \item Iteration sequence: the full alternating projection iteration in the main paper is
    \begin{equation}
        z_\tau^k\xrightarrow{\mathrm{AMP_{stage\ 1}}}\hat{z}_\mathcal{M}^k\xrightarrow{\mathrm{PCP}}\tilde{z}_\mathcal{M}^k\xrightarrow{\mathrm{AMP_{stage\ 2}}} z_\tau^{k+1},
    \end{equation}
    which can be compactly written as the composite mapping:
    \begin{equation}z_\tau^{k+1}=\Phi_{0\rightarrow\tau}\circ P_\mathcal{V}\circ P_\mathcal{M}(\Phi_{\tau\rightarrow0}(z_\tau^k))=(1-\tau)\cdot T(z_\tau^k)+\tau \epsilon_k,
    \end{equation}
    where $T=P_\mathcal{V}\circ P_\mathcal{M}$ is the composite projection operator, and $\epsilon_k\sim \mathcal{N}(0,I)$ is independent and identically distributed (i.d.d) standard Gaussian noise independent of all previous iterations. 
    \item $z_\tau^*$: the unique fixed point of the ideal noise-free iteration, satisfying $z_\tau^*=(1-\tau)\cdot T(\Phi_{\tau\rightarrow0}(z_\tau^*))$.
    \item $e_k=z_\tau^k-z_\tau^*$: the error between the $k$-th iterative value and the optimal fixed point. 
\end{itemize}

\textbf{Regularity Assumptions:} we introduce the following standard and mild regularity assumptions, which are universally satisfied by well-trained large-scale video generation models:
\begin{assumption}[Lipschitz Continuity of Vector Field]
    The learned vector field $v_\theta(z,t)$ is $L$-Lipschitz continuous with respect to $z$ uniformly over $t\in[0,1]$, \ie, for any $z_1,z_2\in \mathcal{Z}$ and $t\in[0,1]$,
    \begin{equation*}
        \|v_\theta(z_1,t)-v_\theta(z_2,t)\|_2\leq L\cdot\|z_1-z_2\|_2,
    \end{equation*}
    where $L > 0$ is the Lipschitz constant.
\end{assumption}

\begin{assumption}[Bounded Second-Order Derivative of ODE Solution]
    The second-order derivative of the ODE solution is uniformly bounded by a constant $M>0$, \ie, for any $z\in\mathcal{Z}$ and $t\in[0,1]$,
    \begin{equation*}
        \|\ddot{z}(t)\|_2=\|\frac{\partial v_\theta}{\partial t}(z,t)+\nabla_zv_\theta(z,t)\cdot v_\theta(z,t)\|_2\leq M.
    \end{equation*}
\end{assumption}

\begin{assumption}[Contractivity of Composite Projection]
    The composite projection operator $T=P_\mathcal{V}\circ P_\mathcal{M}$ is $\gamma$-Lipschitz continuous with $0<\gamma<1$. This holds because: (1) the convex projection $P_\mathcal{V}$ is $1$-Lipschitz non-expansive; (2) the manifold projection $P_\mathcal{V}$ is locally $1$-Lipschitz; (3) the alternating projection between two non-parallel closed sets forms a strict contraction in the constraint neighborhood.
\end{assumption}

\subsection{Convergence to the Intersection $\mathcal{M}\cap\mathcal{V}$}

We now prove that the latent state sequence generated by our alternating projection framework converges to the intersection of the data manifold and the geometric constraint set, which guarantees that the final solution simultaneously satisfies global semantic realism and precise 3D view alignment.

\begin{theorem}[Convergence to $\mathcal{M}\cap\mathcal{V}$]
    Assume $\mathcal{M}\cap\mathcal{V}\neq \emptyset$. For any initial latent code $z_0\in\mathcal{Z}$, the alternating projection sequence $z^{k+1}=P_\mathcal{V}\circ P_\mathcal{M}(z^k)$ converges strongly to a fixed point $z_\mathcal{M}^*\in \mathcal{M}\cap \mathcal{V}$, \ie,
    \begin{equation*}
        \lim_{k\rightarrow\infty}z^k=z_\mathcal{M}^*\in \mathcal{M}\cap \mathcal{V}.
    \end{equation*}
\end{theorem}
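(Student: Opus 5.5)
The plan is to read the statement as a Banach fixed-point argument for the composite operator $T=P_\mathcal{V}\circ P_\mathcal{M}$, using the contractivity assumption (Assumption on Contractivity of Composite Projection). Then I would identify the limit as a point of $\mathcal{M}\cap\mathcal{V}$ using the fact that both projections fix points of their target sets. Concretely, I would work in the complete normed space $(\mathcal{Z},\|\cdot\|_2)$, which is finite dimensional. Since $T$ is $\gamma$-Lipschitz with $0<\gamma<1$, the iterates satisfy $\|z^{k+1}-z^k\|_2=\|T(z^k)-T(z^{k-1})\|_2\le\gamma^k\|z^1-z^0\|_2$. Summing the geometric tail gives
\begin{equation*}
\|z^{m}-z^{k}\|_2\le \frac{\gamma^k}{1-\gamma}\|z^1-z^0\|_2
\end{equation*}
for all $m>k$, so $(z^k)$ is Cauchy and has a limit $z_\mathcal{M}^*\in\mathcal{Z}$. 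By continuity of $T$, we get $z_\mathcal{M}^*=T(z_\mathcal{M}^*)$. Uniqueness of the fixed point also follows from strict contraction.

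Next I would show that the fixed point lies in $\mathcal{M}\cap\mathcal{V}$. Take any $w\in\mathcal{M}\cap\mathcal{V}$, which exists by hypothesis. Since $P_\mathcal{M}$ and $P_\mathcal{V}$ are projections, $P_\mathcal{M}(w)=w$ and $P_\mathcal{V}(w)=w$, so $T(w)=w$. Hence $w$ is a fixed point of $T$, and uniqueness forces $z_\mathcal{M}^*=w\in\mathcal{M}\cap\mathcal{V}$. As a sanity check independent of uniqueness, the range of $P_\mathcal{V}$ is $\mathcal{V}$, and $\mathcal{V}$ is closed. Every $z^k$ with $k\ge1$ lies in $\mathcal{V}$, so the limit does too.

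For membership in $\mathcal{M}$ by a second route, I would use that $\mathcal{M}$ is closed. From the fixed-point relation, $z^*=P_\mathcal{V}(P_\mathcal{M}(z^*))$, and $\|z^*-P_\mathcal{M}(z^*)\|_2$ can be compared with $\|w-P_\mathcal{M}(w)\|_2=0$. I would rely mainly on the uniqueness argument, though.

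The main obstacle is that the contraction assumption is only genuinely justified locally, in a neighborhood of the intersection. The statement claims convergence from an arbitrary initial $z_0$. Moreover, a global strict contraction combined with an intersection containing more than one point would be contradictory. So I would interpret the assumption as a global contraction on $\mathcal{Z}$, as literally stated, which makes the intersection a singleton near the iterates. As a fallback, if only local contractivity on a ball $B$ around $\mathcal{M}\cap\mathcal{V}$ is available, I would first argue that the iterates enter $B$, using non-expansiveness of $P_\mathcal{V}$ and local non-expansiveness of $P_\mathcal{M}$ to show the distance to the intersection is nonincreasing. I would then run the same Cauchy argument inside $B$, since $T(B)\subseteq B$.
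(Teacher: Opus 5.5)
Your argument is correct under the paper's assumptions read literally, but it takes a different route from the paper. You treat the contractivity assumption on $T=P_\mathcal{V}\circ P_\mathcal{M}$ as a global statement, run the Banach fixed-point argument, and then identify the limit by uniqueness, since every point of $\mathcal{M}\cap\mathcal{V}$ is fixed by $T$.

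The paper's proof never uses the contraction constant $\gamma<1$. It runs the classical Fej\'er-monotonicity argument for alternating projections onto convex sets. Step 2 uses only non-expansiveness of $P_\mathcal{M}$ and $P_\mathcal{V}$ to show that $d(z^k)$ is non-increasing. Step 3 argues by contradiction that $d^*=0$, claiming a strict decrease of $d$ at any accumulation point lying outside $\mathcal{M}$ or outside $\mathcal{V}$.

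What each approach buys:
\begin{itemize}
\item Your route gives a short, airtight proof with an explicit geometric rate, $\|z^k-z_\mathcal{M}^*\|_2\le\gamma^k\|z^1-z^0\|_2/(1-\gamma)$, and convergence of the whole sequence rather than just of the distances. The price, as you correctly note, is that a global strict contraction forces $\mathcal{M}\cap\mathcal{V}$ to be a single point, which makes the theorem nearly tautological.
\item The paper's route is meant to need only (strict) non-expansiveness and to allow a larger intersection, with a limit that depends on the initialization. However, its strict-decrease step applies a property of projections onto convex sets to $P_\mathcal{M}$, even though $\mathcal{M}$ is not convex. It also leaves implicit the step from an accumulation point $z^\infty\in\mathcal{M}\cap\mathcal{V}$ to convergence of the full sequence; that step follows from $\|z^{k+1}-z^\infty\|_2\le\|z^k-z^\infty\|_2$.
\end{itemize}

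One caution about your fallback for the merely local case. A non-increasing distance to $\mathcal{M}\cap\mathcal{V}$ keeps the iterates from drifting away, but it does not bring a distant $z^0$ into the ball $B$, so that step would fail as written. Showing that the distance actually drops below the radius of $B$ is exactly what the paper's Step 3 is meant to supply.
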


\begin{proof}
We complete the proof in three steps:

\paragraph{Step 1: Existence of Fixed Points.}
For any $z^*\in\mathcal{M}\cap\mathcal{V}$, by the definition of projection operators:
\begin{itemize}
    \item $P_\mathcal{V}(z^*)=z^*$, since $z^*$ already lies in the constraint set $\mathcal{V}$;
    \item $P_\mathcal{M}(z^*)=z^*$, since $z^*$ already lies on the data manifold $\mathcal{M}$.
\end{itemize}
Thus, $T(z^*)=P_\mathcal{V}\circ P_\mathcal{M}(z^*)$, meaning all points in $\mathcal{M}\cap\mathcal{V}$ are fixed points of the composite operator $T$. The non-emptiness of $\mathcal{M}\cap\mathcal{V}$ (guaranteed by the existence of valid NVS solutions in the pre-trained model's output space) ensures the existence of fixed points.

\paragraph{Step 2: Monotonic Decrease of Distance to $\mathcal{M}\cap\mathcal{V}$.}
Define the distance function to the intersection set:
\begin{equation}
    d(z)=\min_{z^*\in\mathcal{M}\cap\mathcal{V}}\|z-z^*\|_2.
\end{equation}
For any iteration $z^k$, let $z^*\in\mathcal{M}\cap\mathcal{V}$ be the closest point to $z^k$, \ie, $d(z^k)=\|z^k-z^*\|_2$. By the non-expansive property of projection operators:
\begin{equation}
    \|z^{k+1}-z^*\|_2=\|T(z^k)-T(z^*)\|_2\leq\|P_\mathcal{M}(z^k)-P_\mathcal{M}(z^*)\|_2\leq\|z^k-z^*\|_2=d(z^k).
\end{equation}
Thus, $d(z^{k+1})=\|z^{k+1}-z^*\|_2\leq d(z^k)$, meaning the distance sequence $\{d(z^k)\}$ is monotonically non-increasing and bounded below by $0$. By the Monotone Convergence Theorem, $\{d(z^k)\}$ converges to a non-negative limit $d^*\geq 0$.

\paragraph{Step 3: Proof of $d^*=0$ (Contradiction).} Assume for contradiction that $d^*>0$. Then all accumulation points of the sequence $\{z^k\}$ satisfy $d(z)=d^*$, \ie, they lie on the sphere of radius $d^*$ centered at $\mathcal{M}\cap\mathcal{V}$. Let $z^\infty$ be an accumulation point of $\{z^k\}$, then $T(z^\infty)$ is also an accumulation point with $d(T(z^\infty))=d(z^\infty)=d^*$. if $z^\infty\notin \mathcal{M}\cap\mathcal{V}$, we have two cases:
\begin{itemize}
    \item 1) if $z^\infty\notin\mathcal{M}$: then $P_\mathcal{M}(z^\infty)\neq z^\infty$. By the strict non-expansive property of convex projection onto closed convex sets, $\|P_\mathcal{M}(z^\infty)-z^*\|_2<\|z^\infty-z^*\|_2=d^*$, which implies $d(T(z^\infty))<d^*$, contradicting the assumption $d(T(z^\infty))=d^*$.
    \item 2) if $z^\infty \in\mathcal{M}$ but $z^\infty\notin\mathcal{V}$: then $P_\mathcal{V}(z^\infty)\neq z^\infty$. By the strict non-expansive property of manifold projection, $\|P_\mathcal{M}\circ P_\mathcal{V}(z^\infty)-z^*\|_2<d^*$, which also contradicts the assumption $d(T(z^\infty))=d^*$.
\end{itemize}
Thus, $z^\infty\in \mathcal{M}\cap\mathcal{V}$ and $d^*=0$. The sequence $\{z^k\}$ converges strongly to a fixed point $z_\mathcal{M}^*\in \mathcal{M}\cap\mathcal{V}$.
\end{proof}

\section{More implementation details}\label{app:implementation}

We implement our method based on the recent video generation model Wan2.2-I2V-A14B \cite{wan2025}. The original Wan2.2-I2V-A14B denoises a video for 50 steps with time shift 5 via UNIPC solver. Since we have our alternating projection warm-up, we reduce the generation steps to 15 with time shift 3.5. We choose $\tau$ as the first timestamp in the denoising trajectory, which is 
\begin{equation}
    \tau=\frac{3.5\times\frac{15-1}{15}}{1+(3.5-1)\times\frac{15-1}{15}}=0.98.
\end{equation}
All other hyperparameters are kept the same as the original model. Since our model in total runs 30 forward pass (\ie 15 alternating projection steps and 15 generation steps) of the flow transformer while the original model runs 50 times, the evaluation performance of our model is only $0.6$ times of the original model with the same memory usage.

\section{Running Time \& Computational Overhead} 
We report theoretical computational overhead and practical running time for both our method and baselines in Table \ref{tab:time}. Here $T$ represents the iterations required by the reverse process, and $K$ is the sub-iters required by different methods. We choose the smallest $T$ requried by each models. We admit our method requires higher running time than pre-trained model and we will include this in the limitation section, but our method shows better efficiency compared with other training-free methods like LanPaint and NVS-Solver. Notably, since we are optimizing initial noise, we don't rely on a particular sampler. Therefore, if more advanced faster sampler is used, our method can be faster. However, other denoising controlling methods all depend on one well-designed sampler, thus their $T$ is fixed. 

\begin{table}[h!]\tabcolsep=0.05cm  \vspace{-0.7cm}
\centering
\caption{Running time for all methods.} 
\resizebox{1\linewidth}{!}{
\begin{tabular}{r|c|c|c|c|c|c|c|c|c}
\toprule 
 & TrajAttn & ReCamMaster  & ViewCrafter & FlexWorld & NVS-Solver & LanPaint & Reversed & Flow Warped & Ours\\
 \midrule
Iters & $T$ & $T$ & $T$ & $T$ & $T\times K$ & $T\times K$ & $T$ & $T$ & $T+K$\\
Memory & $1\times$ & $1\times$ & $1\times$ & $1\times$ & $4\times$ & $1\times$ & $1\times$ & $1\times$ & $1\times$\\
Time (min) & 2.5 & 2.1 & 1.9 & 2.3 & 8.8 & 4.1 & 3.5 & 3.7 & 4.0 \\
Sampler & Fixed & Fixed & Any & Any & Fixed & Fixed & Any & Any & Any\\
\bottomrule
\end{tabular}
}
\label{tab:time}
\vspace{-0.4cm}
\end{table}

\section{More details about the datasets.}\label{app:dataset}

\textbf{Tanks-and-Temples dataset \cite{Knapitsch2017}:} There are in-total 14 scenes in this dataset, each containing a long video capturing the $360^\circ$ view of the given scenes. Following FlexWorld\cite{chen2025flexworld}, we clip the scenes into 49-frame-long video clips with a stride of 4. However, we observe that there exist plenty of videos with unconstrained camera trajectories for an NVS problem, \ie there are frames full of empty pixels after re-projecting the depth points from input view to that view, which means there is no constrain for a model to generate in that view and the score is totally meaningless. Therefore, we only select the video clips without unreasonable frames as our test set, and all the related scenes include \textit{Ballroom}, \textit{Family}, \textit{Francis}, \textit{Horse}, \textit{Lighthouse}, \textit{M60}, \textit{Panther}, \textit{Playground}, and \textit{Train}. Since there is no official camera pose given, we use VGGT \cite{wang2025vggt} followed with COLMAP \cite{pan2024glomap} calibration to get the ground-truth cameras and the depth maps for each frame in the video clips. In our experiments, the depth of the first frame is projected with the estimated camera intrinsics from VGGT, and rendered to the estimated ground-truth camera to get warped images as our NVS prior. For those baseline models taking camera trajectory as the input, we directly change the conversion of the camera poses as required and feed them into the models directly. 

\textbf{LLFF dataset \cite{mildenhall2019llff}:} This dataset contains 8 scenes, each containing 20 to 62 frames. In order to fit the input frame numbers of the baselines, we either clip or pad the frame numbers to 49. The challenges of this dataset lie in the large camera motion between two consecutive frames, and the scenes contain many high-frequency visual pattens such as densely-distributed leaves and bones. This dataset provides images along with camera poses and 3D points. In order to obtain a valid depth map for warping, we train a 3D Gaussian \cite{kerbl3Dgaussians} initialized by the given 3D points for each scene to render the depth map.

\textbf{DAVIS dataset \cite{davis}:} There are 90 dynamic scenes, and 89 of them are used in our experiments. In order to fit the input frame numbers of the baselines, we either clip or pad the frame numbers to 49. We abandoned \textit{lucia}, because PnP totally failed when doing the camera calibration. We use VIPE \cite{huang2025vipe} to extract the calibrated dynamic cameras for each video as the input cameras. As for target cameras, we first design 10 relative camera trajectories following ReCamMaster \cite{bai2025recammaster}, as illustrated in Table \ref{tab:DAVIS_cam_traj}, and then compose them with the input camera trajectory. In this way, all the cameras can be ensured to rotate near the input camera, making the targeted moving objects focused in the frame. When designing the relative camera trajectories, we first take the median-depth point along the optical axis of the camera of the first frame as our pivot, then determine the moving stride and center based on this pivot. Notably, since we compose the designed trajectory with the input trajectory, the final target trajectory contains both the complicated information of the original complex trajectories and the various novel views from manually designed trajectories, making the final absolute camera challenging but reasonable. 

For the two static datasets, we select the first frame as the input frame, and warp it to the target view to get our NVS prior. As for the dynamic dataset, we have 49 frames with depth and 49 target cameras for each scene. Thus we warp the $k$-th depth frame into the $k$-the target camera, and finally get a 49-frame target warped video as our NVS prior.

\begin{table}[t!]
\caption{Camera trajectories and corresponding descriptions for DAVIS dataset.}
\label{tab:DAVIS_cam_traj}
\resizebox{\linewidth}{!}
{

\begin{tabular}{c|c|c}
\toprule

Index & Trajectory & Description \\
\midrule
1 & Pan right & Move camera to the right \\
2& Pan left & Move camera to the left \\
3& Tilt up & Move camera upwards \\
4& Tilt down & Move camera downwards \\
5& Zoom in & Move camera forward \\
6& Zoom out & Move camera backward \\
7& Translate up with rotation & Move the camera upwards while rotating it downwards by 25 degrees \\
8& Translate down with rotation & Move the camera downwards while rotating it upwards by 25 degrees \\
9& Translate right with rotation & Move the camera to the right while rotating it to the left by 25 degrees \\
10& Translate left with rotation & Move the camera to the left while rotating it to the right by 25 degrees \\

\bottomrule

\end{tabular}
}
\end{table}

\textbf{Trajectory Distribution.} We show the trajectory distribution for each dataset in Figure \ref{fig:data}, including the maximum rotation angle, maximum translation (normalized by median depth), visible-overlap ratios, and average camera speed (angle/frame) for each scene, where TnT mainly includes larger rotation and smaller visible overlap ratio, LLFF shows higher camera speed and DAVIS contains larger camera translation. Note the absolute camera of DAVIS should be composed with the complex video camera, which is not shown in the figure. We also show a qualitative comparison on almost-invisible results in Figure \ref{fig:data}. 

\begin{figure}[h!]
  \centering
  \includegraphics[width=1.0\linewidth]{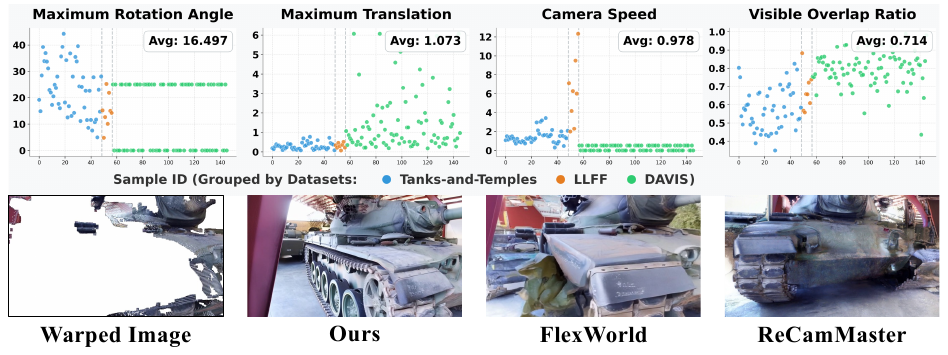}
  \vspace{-0.8cm}
   \caption{Data distribution and almost-invisible results.}
   \vspace{-0.5cm}
   \label{fig:data}
\end{figure}

\section{More details about the baselines.}\label{app:baseline}

We list the adaptations of the baselines below:
\begin{itemize}
    \item For camera-controlled video generation baselines, \ie TrajectoryAttention \cite{trajattn} and ReCamMaster \cite{bai2025recammaster}, we keep everything as the official implementations instead of changing the length of input views into 49. 
    \item For warping-and-inpainting NVS baselines, \ie NVS-Solver \cite{you2025nvs}, ViewCrafter \cite{yu2024viewcrafter}, and FlexWorld \cite{chen2025flexworld}, we keep their core generation models untouched, but adapt their priors the same as ours. 
    \item For LanPaint \cite{zheng2025lanpaint}, we adapt its official Comfy-UI Wan2.2-T2V-A14B video inpainting implementation into a Wan2.2-I2V-A14B version, with the same warped video as the source video and the visibility mask as the inpainting mask. All other hyperparameters are kept the same as its official setting. 
    \item For Reversed Noise, we use the rectified flow forward process \cite{liu2022flow} to add the noise to the warped video as the starting point of the model. All other parameters for generation is kept the same as ours. 
    \item For Flow Warped Noise, we calculate per-pixel optical flow based on the estimated depth and the camera poses. After we sample a random noise for the first frame, we use noise warping following Go-With-The-Flow \cite{gowiththeflow} to warp the first-frame noise into later frames. Then we downsample this 49 noise frames into 13 frames to fit the model requirement by mean-pooling, also following Go-With-The-Flow \cite{gowiththeflow}. 
\end{itemize}

\section{More details about the metrics.}\label{app:metric}

For PSNR, SSIM, and LPIPS, we follow the implementation in FlexWorld \cite{chen2025flexworld}. For FID, we report FID values with both 192 and 2048 feature dimensions extracted from model InceptionV3, using the public torchmetrics package in 1.8.2 version. The CLIP-S value is calculated as the mean of cosine similarities of each frame features extracted from a CLIP-ViT-B/16 model \cite{clip}. When calculating FVD values, the videos are resized to a resolution of 224*224, and clipped to short videos with 16 frames. The final FVD values are calculated as the frechet distance between features of all output and ground-truth video clips, and the features are extracted by StyleGAN-V.

\section{Limitations}\label{app:limitation}

While \nickname{} demonstrates strong zero-shot capabilities for novel view synthesis by optimizing initial noise, it still possesses a few limitations that point to future research directions:  

\textbf{Dependence on the Pre-trained Data Manifold:} First, the efficacy of our framework inherently relies on the fundamental assumption that the target novel view video exists within the output data manifold of the pre-trained video generation model. Although modern foundational video models are trained on massive and diverse real-world datasets, their learned manifold is not entirely exhaustive. In extreme scenarios, such as drastic camera pose transformations with severe occlusions or scenes containing entirely unseen out-of-distribution object categories, the ideal target video may simply fall outside the boundaries of the learned data manifold. Under such circumstances, our manifold alternating projection mechanism cannot physically locate a valid optimal solution, which inevitably leads to a degradation in generation fidelity and structural collapse.  

\textbf{Sensitivity to the Quality of the Geometric Prior:} Second, similar as all other NVS methods, our pipeline imposes certain requirements on the quality of the initial novel view prior. In our design, the Anchored Manifold Projection (AMP) module depends on the pre-trained model's inherent spatial-temporal attention to broadcast valid semantic and geometric information from the observed warped regions to the unconstrained invisible regions. However, if the given prior is severely corrupted (\textit{e.g.}, due to catastrophic depth estimation failures) or if the generation model fails to extract meaningful correlations from the input conditioning, the prior loses its ability to effectively constrain the generation trajectory. Consequently, the denoising process might decouple from the input context, resulting in uncontrolled hallucinations and compromised view consistency.

\section{More qualitative results.}\label{app:more_qualitative}

We provide more qualitative results of all the baselines and our model on all three datasets in Figure \ref{fig:all-tnt} \& \ref{fig:all-llff} \& \ref{fig:all-davis}.

\begin{figure}[t]
\setlength{\abovecaptionskip}{ 2 pt}
\setlength{\belowcaptionskip}{ -2 pt}
\centering
\centerline{\includegraphics[width=1.\linewidth]{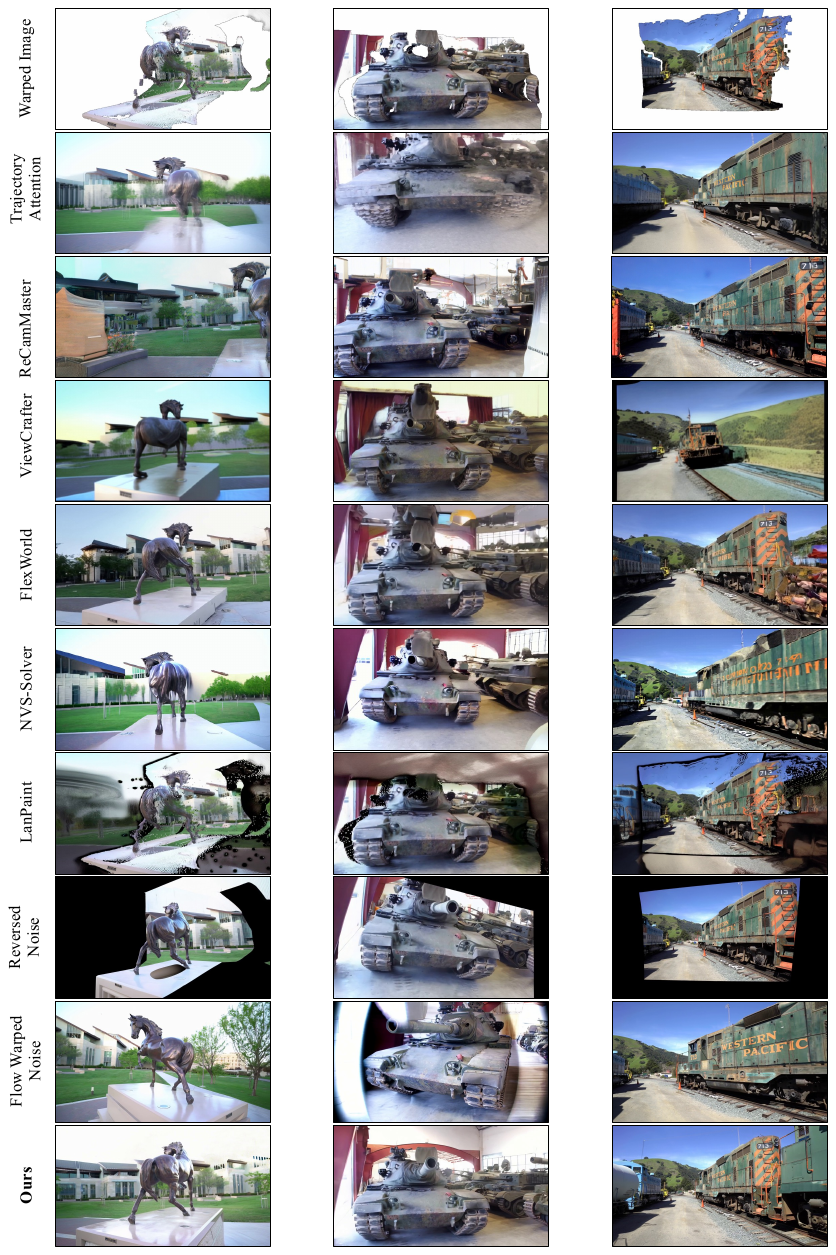}}
    \caption{Qualitative results of all baselines and our methods on Tanks-and-Temples dataset.}
    \label{fig:all-tnt}\vspace{-0.3cm}
\end{figure}

\begin{figure}[t]
\setlength{\abovecaptionskip}{ 2 pt}
\setlength{\belowcaptionskip}{ -2 pt}
\centering
\centerline{\includegraphics[width=1.\linewidth]{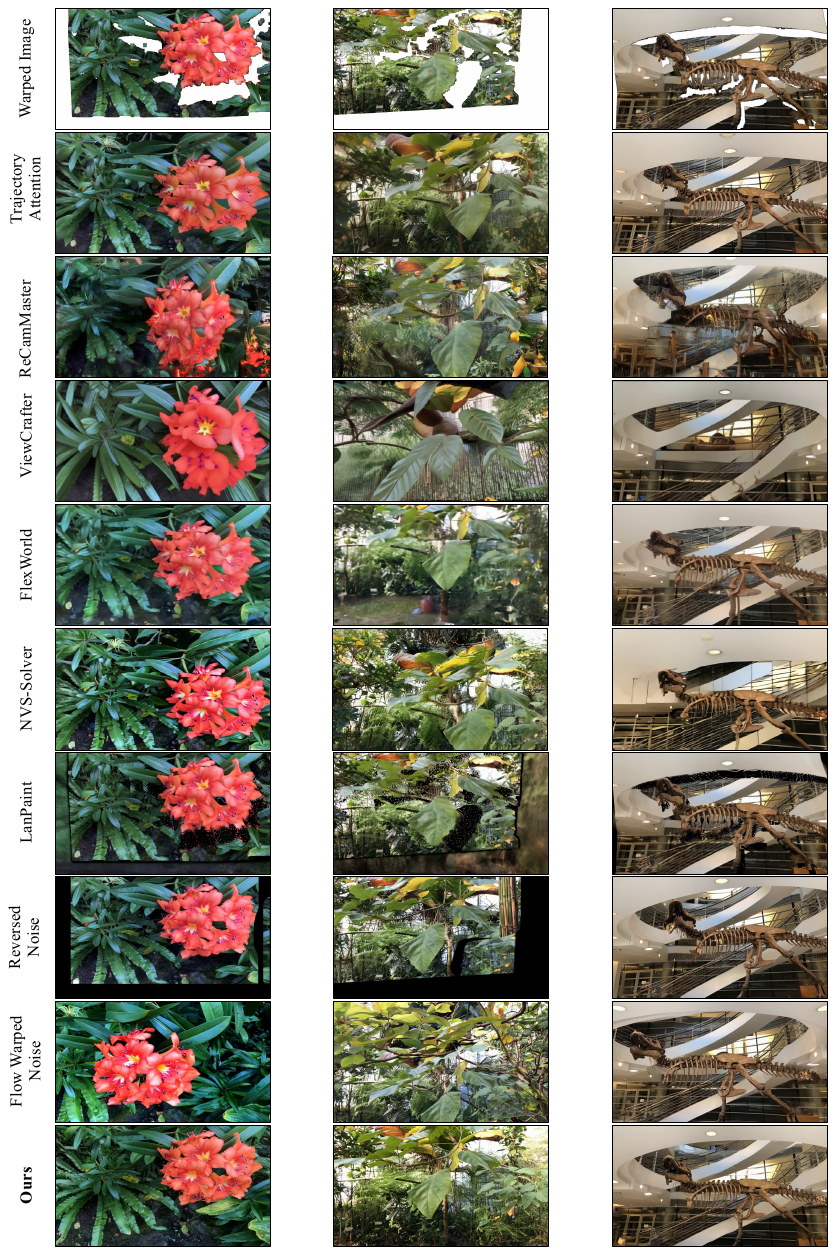}}
    \caption{Qualitative results of all baselines and our methods on LLFF dataset.}
    \label{fig:all-llff}\vspace{-0.3cm}
\end{figure}

\begin{figure}[t]
\setlength{\abovecaptionskip}{ 2 pt}
\setlength{\belowcaptionskip}{ -2 pt}
\centering
\centerline{\includegraphics[width=1.\linewidth]{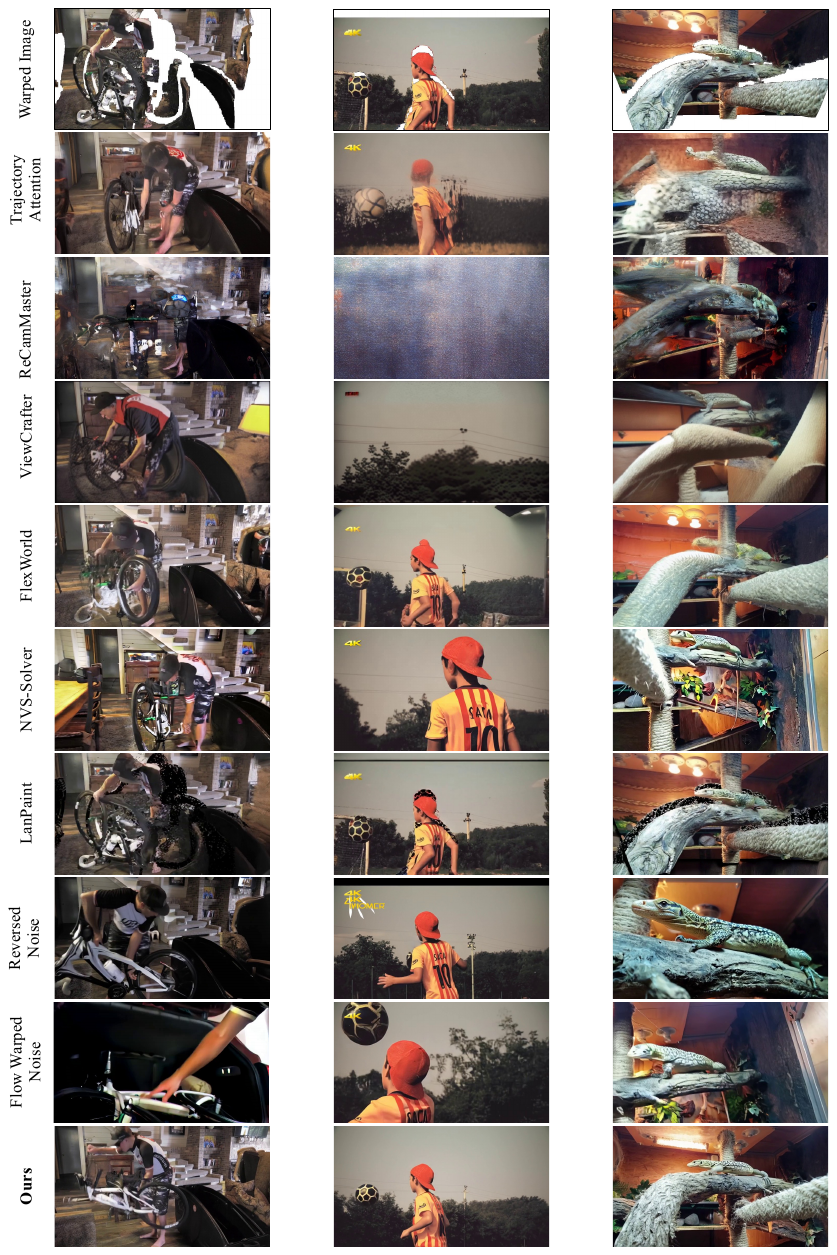}}
    \caption{Qualitative results of all baselines and our methods on DAVIS dataset.}
    \label{fig:all-davis}\vspace{-0.3cm}
\end{figure}

\end{document}